\DeclareMathOperator*{\argmin}{argmin} 
\DeclareMathOperator*{\proj}{Proj} 
\newtheorem{theorem}{Theorem}[]
\newtheorem{corollary}{Corollary}[]
\newtheorem{lemma}{Lemma}[]
\newtheorem{proposition}{Proposition}[]
\theoremstyle{definition}
\newtheorem{assumption}{Assumption}[]
\theoremstyle{remark}
\newcommand{\norm}[1]{\left\lVert#1\right\rVert}
\title{Multi-Bellman operator for convergence of \\ $Q$-learning with linear function approximation}
\author{
  Diogo S. Carvalho\thanks{diogo.s.carvalho@tecnico.ulisboa.pt}, \, Pedro A. Santos, \,and \, Francisco S. Melo 
  \\
  Instituto Superior Técnico,
  University of Lisbon,
  and INESC-ID 
}
\begin{document}

\maketitle

\begin{abstract}
%
%
%
We study the convergence of $Q$-learning with linear function approximation. Our key contribution is the introduction of a novel multi-Bellman operator that extends the traditional Bellman operator. By exploring the properties of this operator, we identify conditions under which the projected multi-Bellman operator becomes contractive, providing improved fixed-point guarantees compared to the Bellman operator.
To leverage these insights, we propose the multi $Q$-learning algorithm with linear function approximation. We demonstrate that this algorithm converges to the fixed-point of the projected multi-Bellman operator, yielding solutions of arbitrary accuracy.
Finally, we validate our approach by applying it to well-known environments, showcasing the effectiveness and applicability of our findings.
\end{abstract}

\section{Introduction}
%
Reinforcement learning aims to approximate the value of actions in different states, considering the expected sum of time-discounted rewards in a Markovian environment. The importance of this task cannot be overstated, as an accurate value function enables an agent to make optimal decisions by selecting actions with the highest value in a given state~\citep{puterman05}. Additionally, the value function facilitates environment evaluation and enables comparisons between different environments.
%

If we can use a tabular representation for the value function, meaning that we can store the value of performing each action on each state individually, the $Q$-learning algorithm converges to the correct value function~\citep{watkins1992q}. When it is not possible or desirable to store values in a table, for example when there are too many states or actions, $Q$-learning can be combined with function approximation~\citep{melo2007q}. 
%
%
Unfortunately, the combination of $Q$-learning and function approximation is troublesome. Even when the function approximation space is linear, the algorithm is not guaranteed to converge~\citep{sutton18}. In fact, the approximation problem that $Q$-learning addresses does not have, in general, a solution~\citep{melo08icml}.
%

%
To address these limitations, we propose an alternative algorithm that effectively solves the function approximation problem, unlike the original $Q$-learning. 
Both the problem and the algorithm proposed can be seen as extensions of the original.
%
%
In this work, our contributions are as follows:
\begin{itemize}
    \item Introduction of the multi-Bellman operator and analysis of its functional properties.
    \item Identification of conditions under which the projected multi-Bellman operator is contractive.
    \item Proposal of the multi $Q$-learning algorithm with linear function approximation.
    \item Theoretical and empirical demonstrations of the convergence of multi $Q$-learning.
    \item Theoretical and empirical evidence that the obtained solution can achieve arbitrary precision.
\end{itemize}
%
%


\section{Background}\label{sec:background}
%
A Markov decision problem (MDP) is a tuple $(\mathcal{X}, \mathcal{A}, \mathcal{P}, \mathcal{R}, \gamma)$, where $\mathcal{X}$ is a discrete set of states (the state space), $\mathcal{A}$ is a finite set of actions (the action space), $\mathcal{P}$ is a set of distributions over $\mathcal{X}$ for each state and action (the transitions), $\mathcal{R}$ is a set of distributions over a bounded subset of $\mathbb{R}$ for each state and action with expected value $r$ (the rewards) and $\gamma$ is a real in $[0, 1)$ (the discount).

Given an MDP, the value of a policy $\pi: \mathcal{X} \to \Delta\left(\mathcal{A}\right)$ is the function $q: \mathcal{X} \times \mathcal{A} \to \mathbb{R}$
\begin{align*}
    q(x, a) = \mathbb{E}\left[ \sum_{t = 0}^{\infty} \gamma^t r_t \mid x_0 = x, a_0 = a \right],
\end{align*}
where the expectation is with respect to rewards $r_t$ that are obtained from performing action $a_t$ on state $x_t$, states $x_{t+1}$ that are obtained by performing action $a_t$ on state $x_t$, and actions $a_t$  that are selected according to $\pi$. There is at least one policy $\pi^*$ that maximizes $q_\pi$ on every state and action~\cite[]{puterman05} and we refer to its value as $q^*$. 
The value $q^*$ satisfies the Bellman equation
\begin{align*}
    q^* = \mathbf{H}q^*,
\end{align*}
where $\mathbf{H}$ is the Bellman operator defined for arbitrary $q:\mathcal{X}\times\mathcal{A}\to\mathbb{R}$ as
\begin{align*}
    (\mathbf{H}q)(x, a) = \mathbb{E}\left[ r(x, a) + \gamma \max_{a' \in \mathcal{A}} q\left(x', a'\right) \right],
\end{align*}
where the expectation is with respect to the next state $x'$ obtained by performing action $a$ on state $x$.

Our goal is to approximate $q^*$. In other words, we want to find a good parameterized representation of the value of an optimal policy for a given MDP given a function approximation space.

\subsection{Function approximation}
Let us consider a differentiable function space ${\mathcal{H}=\{h_\omega : \mathcal{Z} \to \mathbb{R}, \omega \in \mathbb{R}^k \}}$, a distribution $\mu$ over a random $z$ in a discrete set $\mathcal{Z}$ and a loss ${l(\omega) = \frac{1}{2}\norm{ h - h_\omega }^2}$ with the $\mu$-norm $\norm{ h } = \sqrt{\mathbb{E}_\mu\left[h^2(z)\right]}$.

To approximate a function $h$ is to find an element in the subset $\proj h$ of $\mathcal{H}$ defined as
\begin{align*}
    \proj h = \argmin_{h_\omega \in \mathcal{H}} l(\omega).
\end{align*}
Any $\omega$ parameterizing a $h_\omega$ in $\proj h$ is a critical point of $l$ and must verify $\nabla_\omega l(\omega) = 0$.
\paragraph{Linear function approximation}
%
%
Given features $\phi: \mathcal{Z} \to \mathbb{R}^k$, a linear function approximation space is given by the functions such that $h_\omega(z) = \phi^T(z) \omega$. In this case, the gradient of the loss is
\begin{align*}
    \nabla_\omega l(\omega) = - \mathbb{E}\left[ \phi(z) \left( h(z) - h_\omega(z) \right) \right].
\end{align*}
Solving for $\nabla_\omega l(\omega)=0$ we obtain that
\begin{align*}
    \omega = \mathbb{E} \left[ \phi(z) \phi^T(z) \right]^{-1} \mathbb{E} \left[\phi(z)  h(z)\right].
\end{align*}
Thus, if the inverted matrix above exists, the set $\proj h$ has a single element and we can refer to both as $h_{\tilde{\omega}}$.
The solution $\tilde{\omega}$ is also the globally asymptotically stable equilibrium of the dynamical system
\begin{align*}
    \dot{\omega} = \nabla_\omega l(\omega),
\end{align*}
and the limit of the sequence of $\omega_t$ obtained by performing a discretized update with $z_t$ i.i.d. from $\mu$
\begin{align*}
    \omega_{t + 1} = \omega_t + \alpha_t  \left[\phi\left(z_t\right) \left(h(z_t) - h_{\omega_t}(z_t) \right)\right].
\end{align*}

\paragraph{Stochastic approximation}

Without access to $h$, stochastic approximation performs the update
\begin{align*}
    \omega_{t + 1} = \omega_t + \alpha_t  \left[\phi\left(z_t\right) \left( \tau_{t + 1} - h_{\omega_t}(z_t) \right)\right],
\end{align*}
with $\tau_{t+1}$ a possibly $\omega_t$-dependent estimate of ${h\left(z_t \right)}$ called the target and $\alpha_t$ a small positive real called the learning rate. 
If there exists an equilibrium for the ordinary differential equation (o.d.e.)
\begin{align*}
    \dot{\omega} = \mathbb{E}\left[\phi\left(x, a\right) \left( \tau(\omega) - h_{\omega}(z) \right)\right],
\end{align*}
where $\tau(\omega) = \mathbb{E}\left[ \tau_{t+1} \mid z_t \right]$,
and it is globally asymptotically stable, well-established conditions guarantee the stochastic approximation update converges to such equilibrium~\cite[Chapter~2]{borkar2008stochastic}.

\subsection{$Q$-learning with linear function approximation}
We consider now features $\phi:\mathcal{X}\times\mathcal{A}\to \mathbb{R}^k$, parameters $\omega$ in $\mathbb{R}^k$, a distribution over states and actions $\mu$ and linearly parameterized functions $q_\omega: \mathcal{X}\times\mathcal{A}\to\mathbb{R}$ such that $q_\omega(x, a) = \phi^T(x, a) \omega$.

In reinforcement learning we want to approximate $q^*$. Formally, we want to compute $\omega^*$ such that
\begin{align*}
    q_{\omega^*} = \proj q^*.
\end{align*}
We have no knowledge of $q^*$ and are unable to perform an exact stochastic approximation update
\begin{align*}
    \omega_{t + 1} = \omega_t + \alpha_t  \left[\phi\left(x_t, a_t\right) \left( q^*(x_t, a_t) - q_{\omega_t}(x_t, a_t)\right) \right].
\end{align*}
Thinking of the identity $q^* = \mathbf{H}q^*$, $Q$-learning performs instead the stochastic approximation update
\begin{align*}
    \omega_{t + 1} = \omega_t + \alpha_t  \left[\phi\left(x_t, a_t\right) \left( \tau_{t+1} - q_{\omega_t}(x_t, a_t) \right)\right],
\end{align*}
where the target $\tau_{t+1}$ is a sample for $\left(\mathbf{H}q_{\omega_t}\right)(x_t, a_t)$ such that
\begin{align*}
    \tau_{t+1} = r_{t} + \gamma \max_{a_{t+1} \in \mathcal{A}} q_{\omega_t}\left(x_{t+1}, a_{t + 1}\right).
\end{align*}
%
%
A solution to $Q$-learning with linear function approximation must then be the fixed-point of the projected Bellman operator, verifying the equation
\begin{align*}
    q_{\omega} = \proj \left(\mathbf{H}q_{{\omega}}\right).
\end{align*}
Unfortunately, in general such fixed-point equation does not have a solution and, even when it does, the solution may not be an asymptotically stable equilibrium of the associated dynamical system
\begin{align*}
    \dot{\omega} = \mathbb{E}\left[\phi\left(x, a\right) \left( \tau(\omega) - q_{\omega}(x, a) \right) \right].
\end{align*}
Consequently, $Q$-learning with linear function approximation can diverge.
%

The divergence of $Q$-learning is evidenced in classic counter-examples where the parameters of the approximator do not approach any solution, either oscillating within a window \citep{boyan95nips,gordon2001reinforcement} or growing without bound \citep{vanroy96ml,baird95icml}. In practice, there is also evidence of phenomena of catastrophic forgetting \citep{cahill2011catastrophic} and of convergence to incompetent solutions \citep{hasselt18arxiv}.
Currently, the theoretical results that establish convergence of $Q$-learning restrict the data or the features too much~\citep{szepesvari04icml,melo08icml}, and the proposed variants of $Q$-learning that are guaranteed to converge under more general conditions \citep{carvalho2020new,zhang2021breaking,lim2022regularized} converge to biased limit solutions that do not hold good performance guarantees~\citep{chen2022target}. Differently, our approach does not bias the solution nor restricts the data and the features too much.
%

\section{Multi-Bellman operator}\label{sec:mbo}

%
Let us define a multi-Bellman operator $\mathbf{H}^{n}$ from the space of functions $q:\mathcal{X}\times\mathcal{A}\to\mathbb{R}$ to itself such that $\mathbf{H}^{n+1}q = \mathbf{H} \left( \mathbf{H}^n q\right) $ and $\mathbf{H}^1q = \mathbf{H}q$. For example, we have that
\begin{align*}
    (\mathbf{H}^2 q)\left(x_0, a_0\right) = \mathbb{E}\left[ r\left(x_0, a_0\right) + \gamma \max_{{a_1}\in \mathcal{A}} \mathbb{E}\left[ r(x_1, a_1) + \gamma \max_{a_2 \in \mathcal{A}} q\left(x_2, a_2\right)\right] \right].
\end{align*} 
and, more generally,
\begin{align*}
    \left(\mathbf{H}^n q\right)(x_0, a_0) = \mathbb{E}\left[ r\left(x_0, a_0\right) + \gamma \max_{a_1\in \mathcal{A}} \mathbb{E}\left[ r\left(x_1, a_1\right) + \gamma \max_{a_2 \in \mathcal{A}} \mathbb{E}\left[\cdots + \gamma \max_{a_n} q\left(x_n,a_n\right)\right]\right] \right].
\end{align*}

Since ${q^*=\mathbf{H}q^*}$, also  ${q^* = \mathbf{H}^2q^*}$
and, for every, $n$ in $\mathbb{N}$ we have that
\begin{align*}
    q^* = \mathbf{H}^n q^*.
\end{align*}
First, we present the following lemma.
\begin{lemma}\label{prop:contraction-gamma-n}
    The operation $\mathbf{H}^n$ is a contraction in the $\infty$-norm with contraction factor $\gamma^n$.
\end{lemma}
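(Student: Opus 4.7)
The plan is to prove the statement by induction on $n$, with the base case $n=1$ being the well-known fact that the Bellman operator $\mathbf{H}$ is a $\gamma$-contraction in the sup-norm, and the inductive step following immediately from the definition $\mathbf{H}^{n+1} = \mathbf{H} \circ \mathbf{H}^n$.

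For the base case, I would fix two arbitrary functions $q_1, q_2 : \mathcal{X}\times\mathcal{A}\to\mathbb{R}$ and a state-action pair $(x,a)$, and write
\begin{align*}
    \lvert (\mathbf{H}q_1)(x,a) - (\mathbf{H}q_2)(x,a) \rvert
    = \gamma \left\lvert \mathbb{E}\!\left[\max_{a'}q_1(x',a') - \max_{a'}q_2(x',a')\right]\right\rvert,
\end{align*}
since the reward terms cancel. Then I would push the absolute value inside the expectation (Jensen), use the standard pointwise bound $\lvert \max_{a'} q_1(x',a') - \max_{a'} q_2(x',a')\rvert \leq \max_{a'}\lvert q_1(x',a')-q_2(x',a')\rvert$, and finally bound the latter by $\norm{q_1-q_2}_\infty$. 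Taking the supremum over $(x,a)$ gives $\norm{\mathbf{H}q_1-\mathbf{H}q_2}_\infty \leq \gamma\norm{q_1-q_2}_\infty$.

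For the inductive step, assuming $\norm{\mathbf{H}^n q_1-\mathbf{H}^n q_2}_\infty \leq \gamma^n\norm{q_1-q_2}_\infty$, I would apply the base case contraction to the pair $\mathbf{H}^n q_1, \mathbf{H}^n q_2$ and use the definition of $\mathbf{H}^{n+1}$ to conclude
\begin{align*}
    \norm{\mathbf{H}^{n+1}q_1 - \mathbf{H}^{n+1}q_2}_\infty
    \leq \gamma \norm{\mathbf{H}^n q_1 - \mathbf{H}^n q_2}_\infty
    \leq \gamma^{n+1}\norm{q_1-q_2}_\infty.
\end{align*}

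There is no real obstacle here; the only delicate point is the pointwise max-inequality used in the base case, which follows from the elementary fact that $\max_i f_i - \max_i g_i \leq \max_i (f_i - g_i)$ applied symmetrically to both orderings. Everything else is bookkeeping and the definitional recursion $\mathbf{H}^{n+1}q = \mathbf{H}(\mathbf{H}^n q)$ introduced just before the lemma.
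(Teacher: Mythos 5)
Your proposal is correct and follows essentially the same route as the paper's proof: an induction on $n$ whose base case establishes the $\gamma$-contraction of $\mathbf{H}$ via cancellation of the reward terms, Jensen's inequality, the pointwise bound $\lvert \max_{a'} q_1 - \max_{a'} q_2\rvert \leq \max_{a'}\lvert q_1 - q_2\rvert$, and a final bound by $\norm{q_1-q_2}_\infty$, with the inductive step applying the one-step contraction to $\mathbf{H}^n q_1$ and $\mathbf{H}^n q_2$. No gaps or meaningful differences.
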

While unsurprising, the result has profound implications on the use of linear function approximation.

\subsection{Linear function approximation}
%
%
%
We consider the following assumption.
\begin{assumption}
    The features are such that the covariance matrix $\mathbb{E}_\mu\left[ \phi(x, a) \phi^T(x, a)\right]$ is invertible.
\end{assumption}
Under the assumption above, we set the following result.
\begin{proposition}\label{prop:contraction:mbo}
     There exists $N \in \mathbb{N}$ such that, for all $n \geq N$, 
     %
     %
     $\proj \mathbf{H}^n$ is a contraction in the $\mu$-norm.
\end{proposition}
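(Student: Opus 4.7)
The plan is to combine three ingredients: the non-expansiveness of $\proj$ in the $\mu$-norm, the $\gamma^n$-contraction of $\mathbf{H}^n$ in the $\infty$-norm granted by Lemma~\ref{prop:contraction-gamma-n}, and a norm-equivalence bound on the finite-dimensional subspace $\mathcal{H}$ that Assumption~1 makes available. The idea is to chain these so as to obtain a $\mu$-norm contraction factor of the form $\gamma^n C$, with $C$ a finite constant independent of $n$, and then take $n$ large enough to drive the factor below one.

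First, since $\proj$ minimizes a $\mu$-weighted quadratic loss over $\mathcal{H}$, it is the orthogonal projection onto $\mathcal{H}$ with respect to the $\mu$-inner product, and therefore non-expansive in the $\mu$-norm. Second, for any bounded $q$, $\norm{q}_\mu^2=\mathbb{E}_\mu[q^2]\le\norm{q}_\infty^2$ since $\mu$ is a probability distribution. The crucial step is the reverse comparison restricted to $\mathcal{H}$: writing $q_\omega(x,a)=\phi^T(x,a)\omega$, we have $\norm{q_\omega}_\infty\le\phi_{\max}\|\omega\|_2$ with $\phi_{\max}=\sup_{x,a}\|\phi(x,a)\|_2$, while Assumption~1 makes $\Sigma=\mathbb{E}_\mu[\phi\phi^T]$ positive definite with smallest eigenvalue $\sigma_{\min}>0$, yielding $\norm{q_\omega}_\mu^2=\omega^T\Sigma\omega\ge\sigma_{\min}\|\omega\|_2^2$. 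Setting $C=\phi_{\max}/\sqrt{\sigma_{\min}}$ then gives $\norm{q_\omega}_\infty\le C\norm{q_\omega}_\mu$ for every $q_\omega\in\mathcal{H}$.

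Chaining these, for any $q_1,q_2\in\mathcal{H}$ one writes
\begin{align*}
\norm{\proj\mathbf{H}^n q_1-\proj\mathbf{H}^n q_2}_\mu
&\le \norm{\mathbf{H}^n q_1-\mathbf{H}^n q_2}_\mu\\
&\le \norm{\mathbf{H}^n q_1-\mathbf{H}^n q_2}_\infty\\
&\le \gamma^n\norm{q_1-q_2}_\infty\\
&\le \gamma^n C\norm{q_1-q_2}_\mu,
\end{align*}
where the four steps use, respectively, non-expansiveness of $\proj$, the bound $\norm{\cdot}_\mu\le\norm{\cdot}_\infty$, Lemma~\ref{prop:contraction-gamma-n}, and the norm-equivalence bound applied to $q_1-q_2\in\mathcal{H}$. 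Choosing any $N$ with $\gamma^N C<1$ then delivers the claim. The main obstacle I expect is really the norm-equivalence step: one must ensure $\phi_{\max}<\infty$ (feature boundedness, standard in this literature) and then invoke Assumption~1 precisely where it guarantees $\sigma_{\min}>0$, so that $C$ is a finite constant depending only on the features and the sampling distribution, and independent of $n$.
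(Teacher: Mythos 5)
Your proof is correct, but it takes a genuinely different route from the paper's. The paper does not use non-expansiveness of $\proj$: it bounds the projection directly from its closed-form least-squares expression, obtaining $\norm{\proj q - \proj p} \leq \phi_\mathrm{max}^2\sigma_\mathrm{max}\norm{q-p}_\infty$ for \emph{arbitrary} functions $q,p$ (via Cauchy--Schwarz and Jensen), composes with Lemma~\ref{prop:contraction-gamma-n}, and then returns to the $\mu$-norm through $\norm{\cdot}_\infty \leq \frac{1}{\mu_\mathrm{min}}\norm{\cdot}$, which requires the full-support condition $\mu_\mathrm{min}>0$ of Assumption~\ref{ass:data:iid}; this yields the modulus $\frac{\phi_\mathrm{max}^2\sigma_\mathrm{max}}{\mu_\mathrm{min}}\gamma^n$ and the explicit $N = \lceil -\log_\gamma(\phi_\mathrm{max}^2\sigma_\mathrm{max}/\mu_\mathrm{min})\rceil$. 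You instead exploit orthogonality of $\proj$ in the $\mu$-inner product, the trivial bound $\norm{\cdot}\leq\norm{\cdot}_\infty$, and a norm-equivalence valid only on the finite-dimensional subspace $\mathcal{H}$ through $\sigma_{\min}(\Sigma)>0$. What your route buys: a smaller constant $\phi_\mathrm{max}/\sqrt{\sigma_{\min}}$ (hence a smaller $N$), and no reliance on $\mu_\mathrm{min}>0$ for this proposition---Assumption~1 plus bounded features suffices. What the paper's route buys: its Lipschitz bound holds for all pairs of functions, not just pairs in $\mathcal{H}$, and that stronger form is exactly what is reused later in the error-bound proposition, whose proof applies the contraction estimate with one argument equal to $q^*$, which in general does not lie in $\mathcal{H}$; your subspace-restricted bound would not plug into that step without reinstating the $\mu_\mathrm{min}$-based comparison. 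For the statement at hand and for Corollary~\ref{cor:well} (Banach's theorem on the complete space $(\mathcal{H},\norm{\cdot})$, into which $\proj\mathbf{H}^n$ maps), your restricted version is fully adequate, so this is a legitimate alternative proof rather than a gap---just be explicit that your contraction is of the map $\proj\mathbf{H}^n:\mathcal{H}\to\mathcal{H}$.
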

\begin{corollary}\label{cor:well}
    There exists $N \in \mathbb{N}$ such that, for all $n \geq N$, there exists a unique solution $\tilde{\omega}^n$ to
    \begin{align}\label{fp:pmbo}
        q_{\omega} = \proj (\mathbf{H}^n q_{\omega}).
    \end{align}
\end{corollary}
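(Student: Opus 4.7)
The plan is to derive the corollary directly from Proposition~\ref{prop:contraction:mbo} by applying the Banach fixed-point theorem and then using the invertibility assumption to pass from uniqueness of the function to uniqueness of the parameter.

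First, I would fix $N$ as given by Proposition~\ref{prop:contraction:mbo} and let $n\geq N$, so that $\proj \mathbf{H}^n$ is a contraction in the $\mu$-norm. I would observe that $\proj \mathbf{H}^n$ maps any function $q$ to some element of the image of $\proj$, which is exactly the linear subspace $\mathcal{Q} = \{q_\omega : \omega \in \mathbb{R}^k\}$. Since $\mathcal{Q}$ is a finite-dimensional subspace of the (pseudo-)normed space of functions under the $\mu$-norm, and is closed under the $\mu$-norm topology, the restriction of $\proj\mathbf{H}^n$ to $\mathcal{Q}$ is a contraction on a complete metric space.

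Next, I would invoke the Banach fixed-point theorem on this restriction to obtain a unique function $q^\star \in \mathcal{Q}$ satisfying $q^\star = \proj(\mathbf{H}^n q^\star)$. By definition of $\mathcal{Q}$, there exists at least one $\omega$ with $q_\omega = q^\star$, so the fixed-point equation~\eqref{fp:pmbo} has at least one solution.

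The remaining step, and the main subtlety, is uniqueness of the parameter itself. If $q_{\omega_1} = q_{\omega_2}$ in $\mu$-norm, then $\phi^T(x,a)(\omega_1-\omega_2)=0$ $\mu$-almost surely; multiplying by $\phi(x,a)$ and taking expectation gives $\mathbb{E}_\mu[\phi(x,a)\phi^T(x,a)]\,(\omega_1-\omega_2)=0$, and the invertibility of the covariance matrix in Assumption~1 forces $\omega_1=\omega_2$. Hence the unique fixed-point function $q^\star$ corresponds to a unique parameter $\tilde{\omega}^n$, completing the argument. The only mildly delicate point is ensuring that the $\mu$-norm is a genuine norm (rather than a seminorm) on $\mathcal{Q}$ so that completeness and Banach apply cleanly; Assumption~1 again takes care of this, since it rules out nonzero $\omega$ with $\|q_\omega\|=0$.
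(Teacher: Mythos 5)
Your argument is correct and follows essentially the same route as the paper, which obtains the corollary by combining Proposition~\ref{prop:contraction:mbo} with Banach's fixed-point theorem. You additionally spell out details the paper leaves implicit (completeness of the linear subspace under the $\mu$-norm and the passage from uniqueness of the fixed-point function to uniqueness of the parameter via the invertible covariance matrix), and those details are handled correctly.
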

The result states that, while the projected Bellman operator may not be contractive in the $\mu$-norm, the projected multi-Bellman operator is contractive in the $\mu$-norm for sufficiently large $n$. As a consequence, whereas the fixed-point equation of the projected Bellman operator may fail to have any solution, the fixed-point equation of the projected multi-Bellman operator has a unique solution.

The previous result establishes existence and uniqueness of solution. However, it says nothing about the quality of such solution. In the following, we analyze $\tilde{\omega}^n$ in comparison with the projected $q_{\omega^*}$.
\begin{proposition}
    For all $n\geq N$, where $N$ is identified in Corollary~\ref{prop:contraction:mbo},  $\tilde{\omega}^n$ is such that
    \begin{align}
        \norm{ q^* - q_{\tilde{\omega}^n} } \leq \frac{1}{1 - \frac{\sigma_\mathrm{max}\phi_\mathrm{max}^2}{\mu_\mathrm{min}}\gamma^n}\norm{ q^* - q_{\omega^*} },
    \end{align}
where $\sigma_\mathrm{max}=\norm{\mathbb{E} \left[ \phi(x, a) \phi^T(x, a) \right]^{-1}}_2$ and $\phi_\mathrm{max}=\max_{x, a}\norm{\phi(x, a)}_2$.
\end{proposition}
\begin{corollary}\label{cor:qual}
    The sequence $\{\tilde{\omega}^n\}_{n\geq N}$ such that $N$ and $\tilde{\omega}^n$ are identified in Corollay~\ref{prop:contraction:mbo} gives
    \begin{align*}
        \lim_{n \to \infty} \norm{ \omega^* - \tilde{\omega}^n }_2 = 0.
    \end{align*}
\end{corollary}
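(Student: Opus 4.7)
The plan is to deduce parameter convergence from function-space convergence, where the latter is obtained from the bound stated in the preceding proposition. The key observation is that since $q_{\omega^\ast}=\proj q^\ast$ is the $\mu$-orthogonal projection of $q^\ast$ onto the linear subspace $\mathcal{H}$, and $q_{\tilde{\omega}^n}$ also lies in $\mathcal{H}$, the residual $q^\ast-q_{\omega^\ast}$ is $\mu$-orthogonal to $q_{\omega^\ast}-q_{\tilde{\omega}^n}$. By the Pythagorean identity in the $\mu$-inner product, this gives
\begin{align*}
\norm{q^\ast - q_{\tilde{\omega}^n}}^2 = \norm{q^\ast - q_{\omega^\ast}}^2 + \norm{q_{\omega^\ast} - q_{\tilde{\omega}^n}}^2.
\end{align*}

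Next, I would plug in the bound from the previous proposition. Writing $\rho_n = \frac{\sigma_\mathrm{max}\phi_\mathrm{max}^2}{\mu_\mathrm{min}}\gamma^n$, for $n\geq N$ we have $\rho_n<1$ and
\begin{align*}
\norm{q_{\omega^\ast} - q_{\tilde{\omega}^n}}^2 \;\leq\; \left(\frac{1}{(1-\rho_n)^2}-1\right)\norm{q^\ast - q_{\omega^\ast}}^2.
\end{align*}
Since $\gamma<1$, $\rho_n\to 0$ as $n\to\infty$, so the factor in parentheses vanishes and $\norm{q_{\omega^\ast}-q_{\tilde{\omega}^n}}\to 0$.

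Finally, I would convert this function-space convergence to convergence of the parameters. Because the approximation is linear, $q_{\omega^\ast}-q_{\tilde{\omega}^n} = \phi^T(\omega^\ast - \tilde{\omega}^n)$, and
\begin{align*}
\norm{q_{\omega^\ast} - q_{\tilde{\omega}^n}}_\mu^2 = (\omega^\ast - \tilde{\omega}^n)^T\,\mathbb{E}_\mu\!\left[\phi(x,a)\phi^T(x,a)\right]\,(\omega^\ast - \tilde{\omega}^n).
\end{align*}
Under Assumption~1 the covariance $\mathbb{E}_\mu[\phi\phi^T]$ is invertible, hence positive definite, with minimum eigenvalue $\lambda_\mathrm{min}>0$. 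Therefore
\begin{align*}
\norm{\omega^\ast - \tilde{\omega}^n}_2^2 \;\leq\; \frac{1}{\lambda_\mathrm{min}}\,\norm{q_{\omega^\ast} - q_{\tilde{\omega}^n}}_\mu^2 \;\longrightarrow\; 0.
\end{align*}

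The only non-routine step is the Pythagorean decomposition, which is what allows us to subtract off the unavoidable approximation error $\norm{q^\ast-q_{\omega^\ast}}$ from the upper bound rather than merely concluding that $\tilde{\omega}^n$ lies within that error; everything else is the invertibility assumption together with $\gamma^n\to 0$. I do not anticipate a real obstacle beyond being careful that the orthogonality is with respect to the $\mu$-norm that defines $\proj$.
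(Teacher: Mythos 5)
Your proof is correct, and it takes a somewhat different route from the one the paper (implicitly) relies on. The paper gives no standalone proof of this corollary; it is meant to follow from the quality proposition, whose own proof contains the intermediate inequality $\norm{q_{\omega^*} - q_{\tilde{\omega}^n}} \leq \frac{\sigma_\mathrm{max}\phi_\mathrm{max}^2}{\mu_\mathrm{min}}\gamma^n \norm{q^* - q_{\tilde{\omega}^n}}$; chaining that with the stated bound gives $\norm{q_{\omega^*} - q_{\tilde{\omega}^n}} \leq \frac{\rho_n}{1-\rho_n}\norm{q^* - q_{\omega^*}} \to 0$, and then one passes to parameters via positive definiteness of the covariance matrix exactly as you do. You instead work only from the \emph{statement} of the proposition, using that $\proj$ is the $\mu$-orthogonal projection onto the linear span of the features (the normal equations give orthogonality of the residual $q^* - q_{\omega^*}$ to the subspace), so the Pythagorean identity lets you peel off $\norm{q_{\omega^*} - q_{\tilde{\omega}^n}}^2 \leq \bigl(\tfrac{1}{(1-\rho_n)^2} - 1\bigr)\norm{q^* - q_{\omega^*}}^2$. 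This is a genuine and necessary bridge: the proposition's bound alone, without either the orthogonality argument or the intermediate contraction inequality, would only show $\limsup_n \norm{q^* - q_{\tilde{\omega}^n}} \leq \norm{q^* - q_{\omega^*}}$, which does not by itself yield convergence of $\tilde{\omega}^n$; your argument closes that gap cleanly and is self-contained given only the stated results. The trade-off is quantitative: the paper's intermediate inequality gives a rate of order $\gamma^n$ for $\norm{q_{\omega^*} - q_{\tilde{\omega}^n}}$, whereas your Pythagorean bound behaves like $\sqrt{2\rho_n} = O(\gamma^{n/2})$; both vanish, so the limit statement is unaffected. Your final step (smallest eigenvalue $\lambda_\mathrm{min} > 0$ of $\mathbb{E}_\mu[\phi\phi^T]$ under Assumption 1) is exactly what is needed to convert $\mu$-norm convergence of the linear functions into $\ell_2$ convergence of the parameters.
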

The result states that, as $n$ increases, $\tilde{\omega}^n$ (resp. $q_{\tilde{\omega}^n}$) becomes arbitrarily close to $\omega^*$ (resp. $q_{\omega^*}$). 

%

In the following, we propose a stochastic approximation algorithm for computing $\tilde{\omega}^n$ for given $n$.

\section{Multi $Q$-learning}\label{sec:mql}
We want to solve the fixed-point equation
\begin{align*}
    \omega = \proj (\mathbf{H}^n q_{\omega}).
\end{align*}
for arbitrary $n\in\mathbb{N}$. We propose to perform the multi $Q$-learning update
\begin{align*}
    \omega_{t + 1}^n = \omega_t^n + \alpha_t \left[\phi(x_t, a_t) \left( \tau^n_{t+1} - q_{\omega_t^n}(x_t, a_t) \right)\right],
\end{align*}
where the target $\tau_{t+1}^n$ is a sample for $\left(\mathbf{H}^n q_{\omega_t^n}\right) (x_t, a_t)$ such that
\begin{align*}
    \tau_{t+1}^n = r_t + \gamma \max_{\bar{a}\in\mathcal{A}^n} \left[ r_{t + 1} + \gamma^1 r_{t + 2} + \ldots + \gamma^{n-1} q_{\omega_t^n}\left(x_{t + n}, a_{t+n} \right) \right],
\end{align*}
with $\bar{a} = \left(a_{t+1}, a_{t+2}, \ldots, a_{t+n}\right)$ and $r_{t+m}$ and $x_{t+m+1}$ are obtained by performing action $a_{t+m}$ on state~$x_{t+m}$ for $m$ between $1$ and $n$. In practice, at time step $t$, instead of building a $1$-step greedy target, multi $Q$-learning builds an $n$-step target that is obtained by trying every action on every state encountered along an $n$-step trajectory starting at $x_{t+1}$. If we were to use terminology from the planning literature, we could say the algorithm searches with fixed-depth and full-breadth, a rather unexplored setting in reinforcement learning according to \citep[Section~5.3]{moerland2023model}.

Before we present our convergence result, let us consider a couple more of assumptions.
\begin{assumption}\label{ass:data:iid}
    For all $t \in \mathbb{N}$, states and actions are i.i.d. from $\mu$ and $\mu(x, a) \geq \mu_\mathrm{min}$, with $\mu_\mathrm{min}>0$.
\end{assumption}
\begin{assumption}
    The learning rates satisfy the conditions $\sum_{t=0}^\infty \alpha_t = \infty$ and $\sum_{t=0}^\infty \alpha_t^2 < \infty$.
\end{assumption}
The existence of a solution is guaranteed by Corollary~\ref{cor:well}, whose proof resorted to Banach's fixed-point theorem and Proposition~\ref{prop:contraction:mbo}. To establish global asymptotic stability, we make use of a Lyapunov argument. 
Then, we have the following result.
\begin{theorem}\label{theo:convergence:msql}
    There exists $N$ such that, for all $n$ larger than $N$, the sequence of $\omega_t^n$ is such that
    \begin{align*}
        \lim_{t \to \infty} \norm{\tilde{\omega}^n - \omega_t^n }_2 = 0.
    \end{align*}
\end{theorem}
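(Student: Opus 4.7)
The plan is to cast the multi $Q$-learning update as a stochastic approximation scheme and invoke the standard ODE method (Borkar, Chapter~2) together with a Lyapunov argument built around the contraction established in Proposition~\ref{prop:contraction:mbo}. First, I would rewrite the update as $\omega_{t+1}^n = \omega_t^n + \alpha_t\left[F(\omega_t^n) + M_{t+1}\right]$, where $F(\omega) = \mathbb{E}_\mu\!\left[\phi(x,a)\big((\mathbf{H}^n q_\omega)(x,a) - q_\omega(x,a)\big)\right]$ and $M_{t+1}$ is a martingale difference under the natural filtration, using Assumption~\ref{ass:data:iid} and the fact that, conditioned on $(x_t,a_t)$, the sample $\tau^n_{t+1}$ is unbiased for $(\mathbf{H}^n q_{\omega_t^n})(x_t,a_t)$.

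Next, letting $\Sigma = \mathbb{E}_\mu[\phi(x,a)\phi^T(x,a)]$ and $T(\omega) = \Sigma^{-1}\mathbb{E}_\mu[\phi(x,a)(\mathbf{H}^n q_\omega)(x,a)]$, observe that $F(\omega) = \Sigma\,(T(\omega) - \omega)$ and that $q_{T(\omega)} = \proj(\mathbf{H}^n q_\omega)$, so the unique fixed point of $T$ is $\tilde{\omega}^n$ from Corollary~\ref{cor:well}. For $n \geq N$, Proposition~\ref{prop:contraction:mbo} gives a contraction factor $\kappa_n < 1$ such that $\norm{q_{T(\omega)} - q_{T(\omega')}}_\mu \leq \kappa_n \norm{q_\omega - q_{\omega'}}_\mu$, i.e.\ $\norm{T(\omega)-T(\omega')}_\Sigma \leq \kappa_n \norm{\omega-\omega'}_\Sigma$. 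I would then use the Lyapunov function $V(\omega) = \tfrac12\norm{\omega-\tilde{\omega}^n}_2^2$ and compute, along the ODE $\dot\omega = F(\omega)$,
\begin{align*}
\dot V
= (\omega-\tilde\omega^n)^T\Sigma\big(T(\omega)-T(\tilde\omega^n)\big) - \norm{\omega-\tilde\omega^n}_\Sigma^2
\leq -(1-\kappa_n)\norm{\omega-\tilde\omega^n}_\Sigma^2,
\end{align*}
by Cauchy--Schwarz in the $\Sigma$-inner product. Since $\Sigma$ is positive definite, this gives $\dot V \leq -c\,\norm{\omega-\tilde\omega^n}_2^2$ for some $c > 0$, so $\tilde\omega^n$ is globally asymptotically stable.

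For the remaining hypotheses of Borkar's theorem, I would check that (i)~$F$ is Lipschitz, which follows from $\mathbf{H}^n$ being nonexpansive in $\infty$-norm (Lemma~\ref{prop:contraction-gamma-n}), the boundedness of rewards, and the linearity of $q_\omega$ in $\omega$; (ii)~$\mathbb{E}[\norm{M_{t+1}}^2\mid\mathcal{F}_t] \leq K(1+\norm{\omega_t^n}^2)$, since $\tau^n_{t+1}$ is a bounded affine function of $\omega_t^n$; (iii)~the Robbins--Monro step-size conditions of Assumption~3. To establish boundedness of the iterates $\{\omega_t^n\}$ I would apply the Borkar--Meyn stability criterion: consider the scaled field $F_\infty(\omega) = \lim_{c\to\infty} F(c\omega)/c = \Sigma(T_\infty(\omega)-\omega)$, where $T_\infty$ arises from $\mathbf{H}^n_\infty q = \lim_{c\to\infty}\mathbf{H}^n(cq)/c$. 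The operator $\mathbf{H}_\infty^n$ drops the reward terms but retains the nested $\gamma\max$'s, so it inherits the $\gamma^n$-contraction in $\infty$-norm, and the same Lyapunov argument yields global asymptotic stability of $\omega = 0$ for $\dot\omega = F_\infty(\omega)$; hence $\sup_t\norm{\omega_t^n}<\infty$ almost surely.

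The main obstacle I anticipate is the boundedness step: verifying that the scaled limiting field $F_\infty$ is well defined and that $\mathbf{H}^n_\infty$ remains a $\gamma^n$-contraction requires a careful passage to the limit under the $n$ nested maxima and expectations, and confirming that $\proj$ commutes with this scaling in the right sense so that $T_\infty$ inherits the contraction in the $\mu$-norm. Once this is in place, combining global asymptotic stability with the verified noise and step-size conditions yields $\lim_{t\to\infty}\norm{\omega_t^n - \tilde\omega^n}_2 = 0$ almost surely, as claimed.
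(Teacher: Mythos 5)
Your proposal is correct and follows essentially the same route as the paper: recast the update as a stochastic approximation scheme, invoke Borkar's ODE theorem by verifying the Lipschitz mean field, the martingale-difference noise with conditionally bounded second moment, global asymptotic stability of $\tilde{\omega}^n$ via a Lyapunov function $\tfrac12\norm{\omega-\tilde{\omega}^n}_2^2$ driven by the Proposition~\ref{prop:contraction:mbo} contraction, and the scaled-field condition for boundedness of the iterates. Your execution is if anything slightly tighter than the paper's: the Cauchy--Schwarz argument in the $\Sigma$-inner product is a cleaner equivalent of the paper's eigenvalue estimate, and your limiting field $F_\infty(\omega)=\Sigma(T_\infty(\omega)-\omega)$, which keeps the $\gamma^n$ nested-max part and drops only the rewards, handles the $\omega$-dependence of the target under scaling more carefully than the paper's Lemma~\ref{lemma:bound}, which discards the whole target term.
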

%
%

The result establishes conditions under which multi $Q$-learning converges to the unique solution of the fixed-point equation of the projected multi-Bellman operator~\eqref{fp:pmbo}. 
The assumptions are commonplace~\citep{vanroy96ml,carvalho2020new,melo08icml}. Assumptions 1 and 3 are mild. Assumption 2 is the most restrictive of the three, as it requires the data distribution to have no shift during training. In practice, that is usually not the case. For example it is common that throughout the interaction of the agent with the environment, the policy used to collect data changes in response to changes in the approximated value function. Consequently, in general, the samples are not i.i.d. to a fixed distribution. Nevertheless, we can use replay buffers to slow down the data distribution shift. In the limit, the case of offline reinforcement learning, where the replay buffer is completed before reinforcement learning starts, the distribution is indeed fixed and samples are independent and identically distributed. Nevertheless, we do believe our result still holds if the data distribution, while changing in response to the reinforcement learning, converges. 

%
%
%
%

%
For performing the maximization in the target for the update of multi $Q$-learning, we require the ability to simulate transitions and rewards.
%
%
For our experiments, we assume access to a simulator. We highlight the limitation in Section~\ref{ss:limitations}. Nevertheless, we highlight that it is possible to use a learned model of the transitions and rewards. Such use would not harm the convergence guarantees of multi $Q$-learning. In fact, it is even possible to use non-parametric models, such as replay buffers, to sample transitions and rewards, without prejudice of the convergence established. It is also possible to concurrently learn parametric models in a supervised way, even though the theoretical analysis for the extension to this model-learning interaction would require additional mathematical machinery. %
%

We finish the section with a sketch of the proof of Theorem~\ref{theo:convergence:msql}, referring to the appendix for details.
\begin{proof}
Under assumptions 1, 2 and 3, respectively on the data distribution, the features and the learning rates, multi $Q$-learning satisfies conditions under which a stochastic approximation algorithm converges to the equilibrium of the associated dynamical system described in Section~\ref{sec:background}. 

We consider the result of~\cite{borkar2008stochastic} that we reproduce in the supplementary material as Theorem~\ref{theo:borkar:sa}.
Therein, we identify four conditions that we need to prove that hold for multi $Q$-learning. 

First, the expected update must be a smooth function of the parameters $\omega$. To verify this condition, we consider, for some $n\in\mathbb{N}$, the expected update map $g: \mathbb{R}^k \to \mathbb{R}^k$ such that 
\begin{align*}
    g(\omega) = \mathbb{E}\left[ \phi(x, a) \left( \tau^n(\omega) - q_\omega(x, a) \right) \right]
\end{align*}
and show that it is a Lipschitz function of the parameters in Lemma~\ref{lemma:lip}.

Second, the expected difference between the expected update and the actual update, i.e., the noise, must equal zero when conditioned on the past, have bounded expectation and bounded variance. We consider the noise sequence $\{m_t\}_{t\in\mathbb{N}}$ such that
\begin{align*}
    m_{t+1} = \phi(x_t, a_t)\left(\tau^n_{t+1} - q_{\omega_t}(x_t, a_t)\right) - g(\omega_t)
\end{align*}
and verify it forms a martingale difference sequence with bounded variance in Lemma~\ref{lemma:martingale}. 

Third, in Lemma~\ref{lemma:ode}, we show that, for sufficiently large $N$ and $n\geq N$, the o.d.e.
\begin{align*}
    \dot{\omega} = g(\omega)
\end{align*}
has a unique and globally asymptotically stable equilibrium $\tilde{\omega}^n$ that solves the fixed-point equation
\begin{align*}
    \omega = \mathbb{E}\left[ \phi\left(x, a\right) \phi^T\left( x, a \right) \right]^{-1} \mathbb{E}\left[ \phi\left(x, a\right)\tau^n(\omega)  \right].
\end{align*}
To establish the existence and uniqueness of a solution we use Corollary~\ref{cor:well}, which in turn uses Banach's fixed-point theorem and Proposition~\ref{prop:contraction:mbo}. To establish global asymptotic stability, we make a Lyapunov argument. 
We establish the result for all $n \geq N$ with $N = - \log_\gamma \left( \frac{\sigma_\mathrm{max} \phi_\mathrm{max}^2}{\mu_\mathrm{min}} \right)$. 
We do not say, however, that this is the minimum $N$ such that result would hold. Specifically, we do not guarantee our convergence result is the tightest possible.

The first three conditions ensure that, if the updates remain bounded, they converge to $\tilde{\omega}^n$~\cite[Chapter~2]{borkar2008stochastic}. The fourth condition, that we verify in Lemma~\ref{lemma:bound}, ensures such boundedness. 

Having verified the conditions of Theorem~\ref{theo:borkar:sa}, we are able to conclude that, for sufficiently large $N$, the sequence of $\omega_t^n$ generated by multi $Q$-learning converges to $\tilde{\omega}^n$ with probability 1.
\end{proof}

\section{Experiments}\label{sec:experiments}
First, we evaluate multi $Q$-learning for growing $n$ first in the task of approximating $q^*$. We use the classic counter-examples for the convergence of $Q$-learning with linear function approximation.
Then, we consider the task of using a learned approximation of $q^*$ to select actions. We use the classic control environments. For the function approximation space, we use discretized Gaussian features. We use an $\epsilon$-greedy policy where $\epsilon$ decays linearly from 100\% to 5\% during the first half of interactions and remains constant afterwards. We use a replay buffer with 20\% of the total number of timesteps used for the environment. We further detail hyperparameters used for each environment in the corresponding paragraph. We average the results across five runs, with standard deviation intervals, and plot a moving average of the last $5\%$ of the total number of time steps.

\subsection{Classic counter-examples}\label{sec:evaluation:prediction}
\paragraph{$\omega \to 2 \omega$}
The $\omega \to 2\omega$ classic counter-example is due to \cite{vanroy96ml}. Here, the MDP has two states $y_1$ and $y_2$ and only one action $b_1$. Performing the action on any of the two states always takes the agent to the second state. The reward received is always zero. Therefore, $q^*$ is zero.
To approximate $q^*$ we consider features $\phi: \mathcal{X} \times \mathcal{A} \to \mathbb{R}$ such that $\phi(y_1, b_1) = 1$ and $\phi(y_2, b_1) = 2$. 
The projection of $q^*$ is $\omega^* = 0$ and $q_{\omega^*}$ equals the solution $q^*$. $\omega^*$ also verifies the Bellman fixed-point equation $q_{\omega^*} = \proj \left(\mathbf{H} q_{\omega^*}\right)$. Regardless, considering a discount factor of $0.9$ and a learning rate of $10^{-2}$, when the distribution over states and actions is uniform, the parameters of $Q$-learning, that is multi $Q$-learning with $n=1$, diverge to infinity. 
Figure \ref{fig:evaluation:w2w} shows that for sufficiently large $n$, specifically $n=4$, we have convergence to the correct solution.

\paragraph{Star}
The Star classic counter-example is due to \cite{baird95icml}. Here, the MDP has six states $y_1$ to $y_6$ and two actions $b_1$ and $b_2$. The first action always takes the agent two the last state, the second action takes the agent to any of the first five states uniformly. The reward received is always zero. Therefore, $q^*$ is zero.
To approximate $q^*$ we consider features $\phi: \mathcal{X} \times \mathcal{A} \to \mathbb{R}^{13}$ such that, for $j$  between $1$ and $6$, for all $i$ between $1$ and $13$ $\phi_i(y_j, b_2)= \mathbf{1}(i = j + 1)$, for $i$ between $2$ and $6$ $\phi_i(y_j, b_1) = 2 \cdot \mathbf{1}(i = j+1)$, for $j$ between $1$ and $5$ $\phi_1(y_j, b_1) = \mathbf{1}(j \leq 5)$ and $\phi_1(y_6, b_1) = 2$, $\phi_i(y_j, b_1) = 0$ otherwise.
The projection of $q^*$ is $\omega^* = 0$ and $q_{\omega^*}$ equals the solution $q^*$. $\omega^*$ also verifies the Bellman fixed-point equation $q_{\omega^*} = \proj \left(\mathbf{H} q_{\omega^*}\right)$. Despite the apparently benign conditions, considering a discount factor of $0.995$ and a learning rate of $10^{-2}$, when the distribution over states and action is generated by selecting the first action one sixth of the times and the second action five sixths of the times, the parameters of $Q$-learning, that is multi $Q$-learning with $n=1$, diverge to infinity. 
Figure \ref{fig:evaluation:star} shows that for sufficiently large $n$, specifically $n=4$, we have convergence to the correct identically zero solution.

\begin{figure}[t]
    \centering
    \begin{subfigure}[b]{0.48\textwidth}
        \centering
        \includegraphics[height=0.75\textwidth]{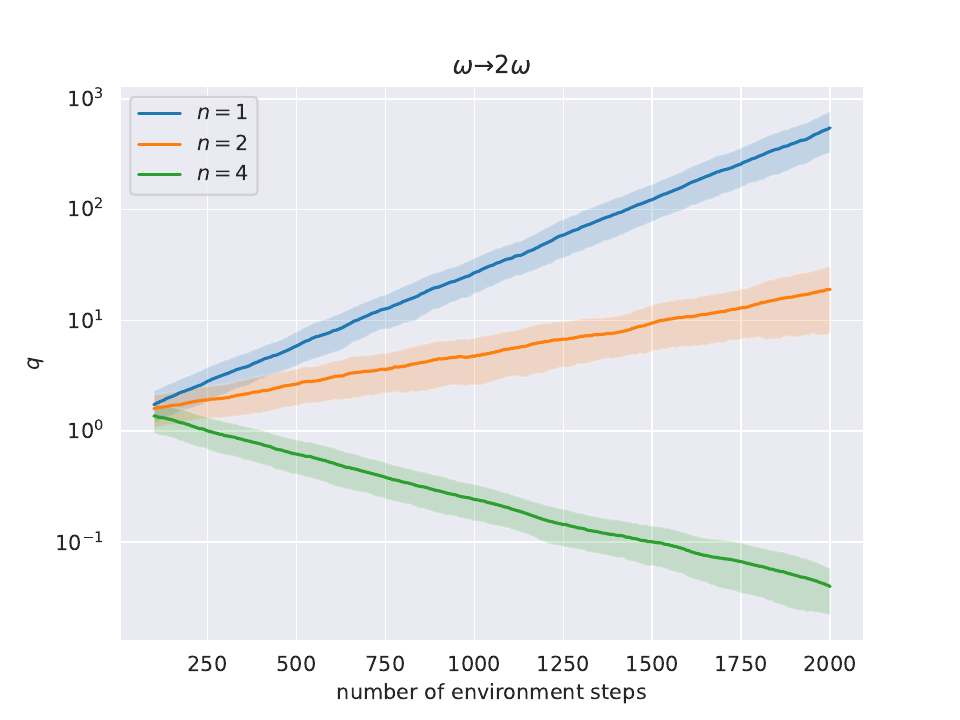}
        \caption{$\omega \to 2 \omega$.}
        \label{fig:evaluation:w2w}
    \end{subfigure}
    \hfill
    \begin{subfigure}[b]{0.48\textwidth}
        \centering
        \includegraphics[height=0.75\textwidth]{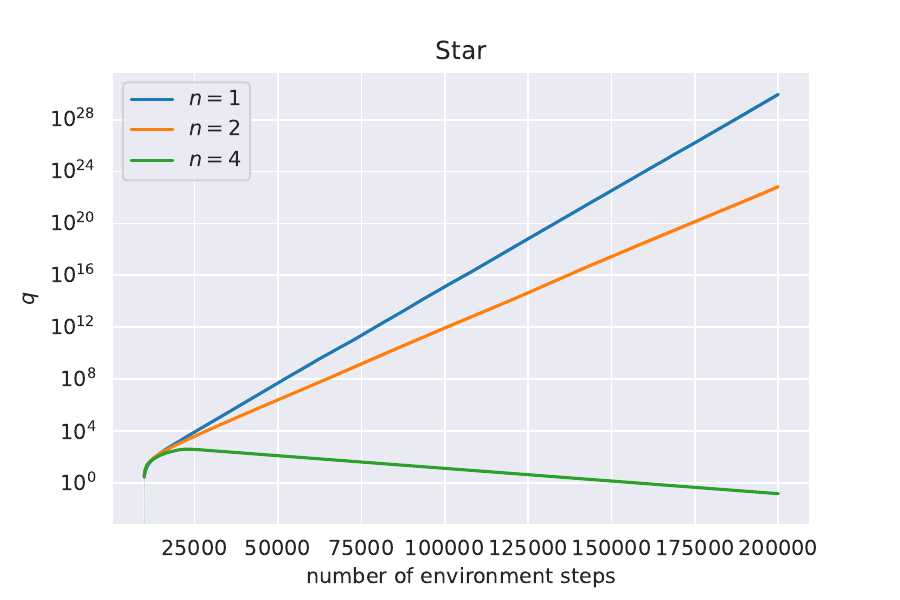}
        \caption{Star.}
        \label{fig:evaluation:star}
    \end{subfigure}
    \caption{Classic counter-examples. The $x$-axis shows the number of environment steps. The $y$-axis shows the estimate of $q$-values of the state visited. In both environments, $n=4$ was necessary and sufficient to achieve convergence to the exact optimal $q$-values which are identically zero.}
    \label{fig:evaluation:classic-counters}
\end{figure}

\subsection{Classic control}
\paragraph{Cartpole}
Cartpole is a classic control problem proposed by~\cite{barto1983neuronlike}, where a cart balances a pole. The actions of the agent are to push the cart left or right. The state space is a four-tuple with the position and velocity of the cart and the angle and angular velocity of the pole. The reward is always one unless the pole falls and the agent reaches a terminal state with reward zero. During training, episodes last at most five hundred interactions. We use Gaussian features in~$\mathbb{R}^{16}$, obtained by discretizing each dimension of the state space in two, a discount factor of $0.99$ and a learning rate of $3\cdot10^{-2}$. Figure~\ref{fig:evaluation:ablation:cartpole} shows the results. $Q$-learning can not perform. When $n \geq 2$, multi $Q$-learning is able to balance the pole and collect rewards.

\paragraph{Mountaincar}
Mountaincar is a classic control problem proposed by~\cite{Moore90efficientmemory-based} where a car must climb a valley. The actions of the agent are to push the car left, push the car right or do nothing. The state space is a double of position and velocity of the car. The reward is always minus one unless the car is at the top of the valley, after which the agent reaches a terminal state with reward zero. During training, episodes last a maximum of two hundred interactions. We use Gaussian features in~$\mathbb{R}^{256}$ obtained by discretizing each dimension of the state space in sixteen, a discount factor of $0.99$ and a learning rate of $3\cdot10^{-3}$. Figure~\ref{fig:evaluation:ablation:mountaincar} shows the results. For all $n$, multi $Q$-learning is able to select actions to successfully climb the hill and solve the problem.

\paragraph{Acrobot}
Acrobot is a classic control problem proposed by~\cite{sutton1995generalization} where a joint actuates two links such that one end is fixed and the other is free. The actions of the agent are to apply a negative torque to the joint, apply a positive torque to the joint or do nothing. The state space is the six-tuple composed of sine, cosine and angular velocity of each link. The reward is always minus one unless the free end of the links reaches a target height, after which the agent reaches a terminal state with reward zero. During training, each episode last a maximum number of five hundred interactions. We use Gaussian features in~$\mathbb{R}^{4096}$ obtained by discretizing each dimension of the state space in four, a discount factor of $0.99$ and a learning rate of $3\cdot10^{-3}$. Figure~\ref{fig:evaluation:acrobot} shows the results. As $n$ increases, the performance of multi $Q$-learning increases and becomes more stable.

\begin{figure}[t]
    \centering
    \begin{subfigure}[b]{0.32\textwidth}
        \centering
        \includegraphics[height=0.75\textwidth]{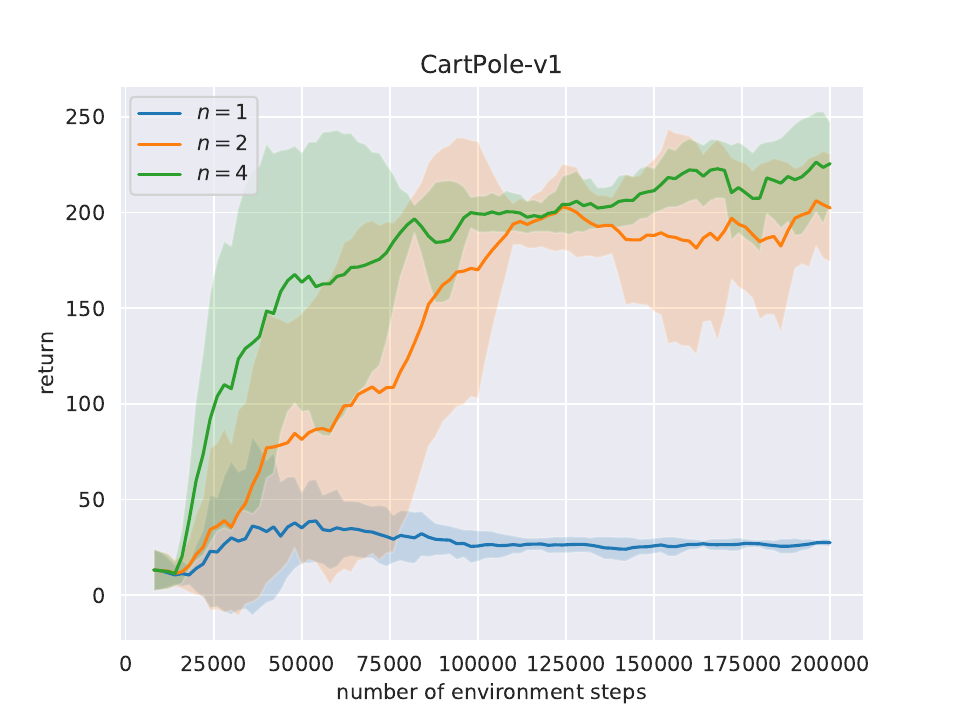}
        \caption{Cartpole.}
        \label{fig:evaluation:ablation:cartpole}
    \end{subfigure}
    \hfill
    \begin{subfigure}[b]{0.32\textwidth}
        \centering
        \includegraphics[height=0.75\textwidth]{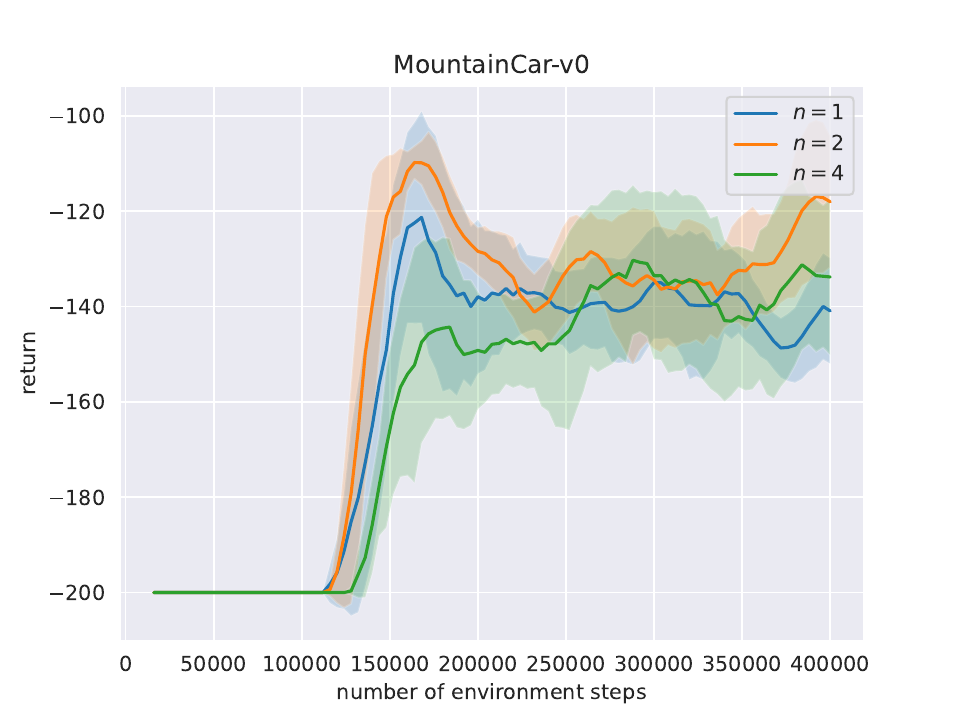}
        \caption{Mountaincar.}
        \label{fig:evaluation:ablation:mountaincar}
    \end{subfigure}
    \hfill
    \begin{subfigure}[b]{0.32\textwidth}
        \centering
        \includegraphics[height=0.75\textwidth]{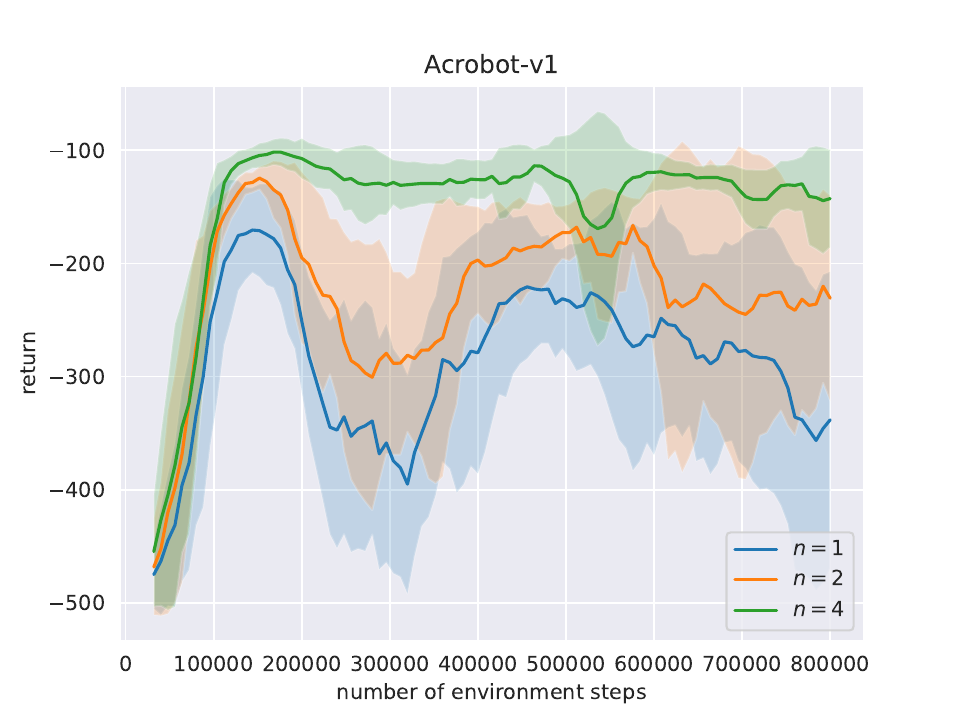}      
        \caption{Acrobot.}
        \label{fig:evaluation:acrobot}
    \end{subfigure}
    \caption{Classic control problems. The $x$-axis shows the number of environment steps. The $y$-axis shows the average return achieved over 30 evaluation episodes. We can see that, as $n$ increases, the performance of multi $Q$-learning with linear function approximation remains or improves.}
    \label{fig:evaluation:classic-control}
\end{figure}

\section{Related work}\label{sec:rw}
\subsection{Convergence results}

We analyze results on the convergence of $Q$-learning with linear function approximation. A few works provide conditions under which the standard $Q$-learning method converges. Others, propose a different learning objective and thus result in significantly modified variants of $Q$-learning, which are the gradient-TD methods. More recently, other works exploit the finding that regularization can ensure convergence of the $Q$-learning updates. From a different perspective, a few works have analyzed the impact of lookahead in policy improvement and evalutation steps.

\paragraph{Convergence conditions}
Around the time the divergence of $Q$-learning was established, some works aimed at reaching conditions under which $Q$-learning or small variants of it would converge.
The first works proposed restrictions of the function approximation spaces themselves. They establish that with some specific choices of features, $Q$-learning is guaranteed to converge~\citep{singh1994reinforcement,ormoneit2002kernel,szepesvari2004interpolation}. 
The linear architectures considered across the three works are extensions of what would be the one-hot representation in the tabular case.
Afterwards, another work considered again general linear function approximation architectures. \citet{melo08icml} prove that $Q$-learning with linear function approximation converges if the distribution of state-action pairs the agent uses to learn is sufficiently close to the distribution that the optimal policy would induce. Such case restricts the convergence of $Q$-learning closer to on-policy settings.

\paragraph{Gradient-TD methods}\label{sec:gtdm}
Instead of finding conditions under which $Q$-learning converges, a different line of works proposes to take a step back and modify the objective that $Q$-learning is trying to solve.
\citet{maei2010toward} propose to perform full-gradient descent on a different objective, called the projected Bellman error. The resulting algorithm, called Greedy-GQ, is part of a class of gradient-TD methods. 
Being a full gradient method, Greedy-GQ is provably convergent to a minimum in the linear approximation space. 
However, the method can converge to strictly local minima and there is no guarantee that the resulting greedy policy is a good control policy \citep{scherrer2010should}. Gradient-TD methods are also less efficient than semi-gradient methods~\citep{mahadevan2014proximal,du2017stochastic}.

\paragraph{Regularized methods}
The problem of divergence of $Q$-learning with function approximation was significantly revived after an empirical success story of $Q$-learning with deep neural networks~\citep{mnih2015human}. One of the components of the renowned deep $Q$-network (DQN) therein is a target network that aims at compensating the instability generated by the $Q$-learning updates.
While the DQN is not provably convergent, its empirical success inspired theoretical results. The works of \citet{carvalho2020new}, \citet{zhang2021breaking} and \citet{chen2022target} provided convergence guarantees for variants with target networks. Additionally, another work points out that the target network can be seen as a regularizer~\cite{piche2021beyond}.
As hinted by \citet{farahmand2011regularization}, several works prove that various forms of regularization of the $Q$-values or the parameters themselves can stabilize $Q$-learning, resulting in a convergent algorithms \citep{zhang2021breaking,lim2022regularized,agarwal2021online}. However, the introduction of any of the regularizers biases the limit solution encountered.
%




\paragraph{Non-linear function approximation}

The behaviour of $Q$-learning with linear function approximation has been the focus of several theoretical works. In practice, however, $Q$-learning is mostly used with non-linear function approximation, especially through neural networks \citep{mnih2015human}. Still, there are works that address this more general setting.
%
%
A recent work suggests a loss function that is decreasing over time, assuming the neural network converges to a target network at each step \citep{wang2021convergent}. However, having a loss function that is monotonically decreasing does not imply that neither the parameters of the approximator are converging nor that the $Q$-values are converging.
\citet{NEURIPS2019_98baeb82} and \citet{xu2020finite} provide finite-time analysis of $Q$-learning with over-parameterized neural networks that imply that as the size of the network grows to infinity, convergence is guaranteed at a sub-linear rate. We note that as the size of the network grows to infinity, the learning architecture also grows closer to a tabular representation. Therefore, despite interesting, the results do not imply convergence when a practical neural network is employed.

\subsection{Lookahead, planning and model-based reinforcement learning}
The multi-Bellman operator considered relates to other operators that lookahead. The multi $Q$-learning algorithm relates to other planning and model-based reinforcement learning algorithms. 

\paragraph{Lookahead}
In the context of policy iteration, several works hinted at the theoretical benefits of lookahead in the tabular case \cite{de2018multi,efroni2018beyond}. \cite{efroni2018multiple} then identified a problem with soft policy improvement for lookahead policies, which happens if function approximation is used. Specifically, contrarily to what happens with a single-step policy improvement step, a multistep policy improvement step is not necessarily monotonically increasing. 
However, \cite{efroni2020online} and \cite{winnicki2022reinforcement} respectively provide finite-time and asymptotic results for lookahead in approximate policy iteration. Our work differs both on the problem side and on the solution side. On the problem side, we focus on the problem of divergence of $Q$-learning, a value-based algorithm, when used with linear function approximation architectures that are not tables. The projected multi-Bellman operator that we introduced differs from ones in the referred works in that it is designed for evaluation of the optimal policy, not evaluating or improving on a given policy.

\paragraph{Planning and model-based reinforcement learning}
Multi $Q$-learning can be casted under the umbrella of multi-step real time dynamic programming algorithms, as defined by \cite{efroni2018multiple} and discussed in \cite{moerland2020think}. Such algorithms integrate planning and learning and have several successful practical applications \citep{silver2017mastering,silver2018general}. We refer to the survey of \cite{moerland2023model} for varied interesting algorithms that include $Q(\sigma)$~\citep{de2018multi}, tree-backup~\citep{precup2000eligibility} and multi-step expected SARSA~\citep{sutton18}. While most such algorithms are policy-based and work on-policy, requiring separate value and policy networks and behavior-dependent solutions, in our case, we have an off-policy, value-based algorithm. Moreover, multi $Q$-learning plans exclusively at training time---planning is not necessary in order to select an action to execute. Multi $Q$-learning plans with fixed depth and full breadth, which is a rather unexplored setting according to \citep{moerland2023model}. Specifically, the depth is not adaptive nor full and every action is tried on every state along a planning tree. Unfortunately, we do not believe adaptive depth or limited breadth would result in convergence. The setting could, however, bring computational benefits that could make multi $Q$-learning more practical.

\section{Conclusion}
%
In conclusion, our work has made significant contributions to addressing the convergence challenge in $Q$-learning with linear function approximation. By introducing the multi-Bellman operator and demonstrating its contractive nature, we have paved the way for improved convergence properties. The proposed multi $Q$-learning algorithm effectively approximates the fixed-point solution of the projected multi-Bellman operator. Importantly, we have shown that the algorithm converges under relatively mild conditions.
%
%
The implications of our findings extend beyond this study, as they represent a substantial breakthrough in the problem of convergence in $Q$-learning with linear function approximation. Our work has the potential to inspire further advancements in theory and algorithm development within the field of reinforcement learning research.

We highlight two limitations of our work below. Afterwards, we link them with future work.

\subsection{Limitations}\label{ss:limitations}
%
%
%

Even though our algorithm can be combined with models that are learned concurrently to the reinforcement learning update, in our experiments, we used a known model of the environment. It is not the case that such known model is always available in applications. 

%
Besides, multi $Q$-learning plans by performing every action on every reached state along trajectories of $n$ steps. Thus, the computational cost of performing an update grows exponentially with $n$, where the base for the exponent is the size of the action space.

\subsection{Future work}\label{ss:fw}
%

%
There are several techniques for learning a model of the environment. For example, we can learn the model previously or concurrently to reinforcement learning; the model can be exact or inexact; the model can be parametric or non-parametric. Multi $Q$-learning can be combined with any of the mentioned model-based approaches in order to remove the need for a known model.

%
Multi $Q$-learning does not hold so benign properties when we are unable to perform every action on every state along $n$ step trajectories. In case only some actions are performed on each state, we cannot show convergence to the solution to an appropriate fixed-point equation. Nevertheless, we expect that, in some cases, the computational benefits of not performing every action on every state to outweigh the theoretical comfort. A practical analysis of this trade-off would be valuable.

%
%
It would also be interesting to analyze multi $Q$-learning with non-linear function approximation, theoretically and empirically. Specifically, under which conditions and function approximation architectures would our convergence result still hold? How would multi $Q$-learning with non-linear function approximation compare, in practice, with relevant policy-free reinforcement learning algorithms such as the DQN~\citep{mnih2015human}?


\bibliography{biblio}


\clearpage
\appendix
\section{Appendix}
\setcounter{subsection}{1}
In this appendix, we provide proof of all theoretical results presented in the main document. In Section~\ref{sec:app:back}, we reproduce a general convergence result for stochastic approximation that was referred to in Section~\ref{sec:background} of the main document. In Section~\ref{sec:app:mbo}, we provide proof for the theoretical results we presented in Section~\ref{sec:mbo}, referring to properties of the multi-Bellman operator proposed. Finally, in Section~\ref{sec:app:mql}, we prove the convergence result for multi $Q$-learning that appeared in Section~\ref{sec:mql} of the main document. The latter proof combined the general stochastic approximation convergence result of Section~\ref{sec:app:back} and the properties of the multi-Bellman operator of Section~\ref{sec:app:mbo}.

\subsection{Background}\label{sec:app:back}
In this section, we reproduce a general convergence result on stochastic approximation algorithms, as our own convergence result presented as Theorem~\ref{theo:convergence:msql} in the main paper heavily relies on it.

\subsubsection{Stochastic approximation}
Here we reproduce a convergence result for stochastic approximation\cite[Chapter 2]{borkar2008stochastic}. The result establishes conditions under which a stochastic approximation algorithm converges.
\setcounter{theorem}{-1}
\begin{theorem}\label{theo:borkar:sa}
Let us suppose that the following hold for the stochastic approximation setting of Sec.~\ref{sec:background}.
\begin{enumerate}
    \item The map $g: \mathbb{R}^k \to \mathbb{R}^k$ such that
    \begin{align*}
        g(\omega) = \mathbb{E}\left[\phi\left(z\right) \left( \tau(\omega) - g_{\omega}(x, a) \right) \right]
    \end{align*}
    is Lipschitz;
    \item The sequence of $m_{t}$ such that 
    \begin{align*}
        m_{t+1} = \phi(z_t)\left(\tau_{t+1} - g_{\omega_t}(z_t)\right) - g(\omega_t);
    \end{align*}
    is a martingale difference sequence with respect to $\{(\tau_s, \omega_s): s \leq t\}$ and is such that
    \begin{align*}
        \mathbb{E}\left[ \norm{m_{t+1} }^2 \mid \{(\tau_s, \omega_s): s \leq t\}  \right] \leq c_m \left( 1 + \norm{ \omega_t }^2 \right).
    \end{align*}
    \item The o.d.e
    \begin{align*}
        \dot{\omega} = g(\omega)
    \end{align*}
    has a unique and globally asymptotically stable equilibrium $\omega^*$ such that
    \begin{align*}
        \omega^* = \mathbb{E}\left[ \phi\left(z\right) \phi^T\left( z \right) \right]^{-1} \mathbb{E}\left[ \phi\left(z\right)\tau(\omega)  \right];
    \end{align*}
    \item The map $g_c : \mathbb{R}^k \to \mathbb{R}^k$ such that 
    \begin{align*}
        g_c(\omega) = \frac{g(c \omega)}{c}
    \end{align*}
    is such that $\lim_{c \to \infty} \norm{ g_c(\omega) - g_\infty(\omega) }$ uniformly on compacts for a $g_\infty: \mathbb{R}^k \to \mathbb{R}^k$ and that the origin is the unique and globally asymptotically stable equilibrium of
    \begin{align*}
        \dot{\omega} = g_\infty(\omega).
    \end{align*}
\end{enumerate}
Then, the sequence of stochastic approximation updated $\omega_t$ converges to $\omega^*$.
\end{theorem}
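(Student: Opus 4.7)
The plan is to follow Borkar's two-stage strategy: first establish almost-sure boundedness of the iterates $\{\omega_t\}$ (the \emph{stability} step) using condition 4, and then apply the standard ODE method under conditions 1, 2 and 3 to deduce convergence to $\omega^*$ from the pre-bounded trajectory.

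For stability, I would construct an increasing sequence of times $T_k$, chosen so that $\sum_{T_k \leq t < T_{k+1}} \alpha_t$ equals a fixed horizon $T$. On each window, rescale by $r(k) = \max(1, \norm{\omega_{T_k}})$ and set $\hat{\omega}_t = \omega_t / r(k)$. The rescaled recursion is an SA scheme driven by the scaled drift $g_c$ with $c = r(k)$, plus a rescaled martingale-difference noise whose conditional second moment inherits a bound of the same form as in condition 2. Condition 4 says that $g_c \to g_\infty$ uniformly on compacts and that the origin is the unique globally asymptotically stable equilibrium of $\dot{\omega} = g_\infty(\omega)$. Combining these with a Gronwall-type tracking argument between grid points and a Borel–Cantelli argument for the noise contribution, one shows that whenever $\norm{\hat{\omega}_{T_k}} = 1$ one has $\norm{\hat{\omega}_{T_{k+1}}} \leq 1/2$ for all sufficiently large $k$ and $T$ large enough. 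This geometric contraction of the rescaled norm across windows forces $\sup_t \norm{\omega_t} < \infty$ almost surely.

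With boundedness in hand, I would introduce the piecewise-linear interpolation $\bar{\omega}(\cdot)$ defined on the clock $t_n = \sum_{s<n} \alpha_s$ by $\bar{\omega}(t_n) = \omega_n$, and write the recursion as a perturbed Euler scheme for $\dot{\omega} = g(\omega)$ with perturbation $M_n = \sum_{s<n} \alpha_s m_{s+1}$. Condition 1 provides existence, uniqueness, and continuous dependence for the ODE; condition 2, together with $\sum \alpha_t^2 < \infty$ and the a.s. bound on $\omega_t$, makes $\{M_n\}$ an $L^2$-bounded martingale, which therefore converges a.s., so the perturbation is negligible on any shrinking interval. A standard equicontinuity and Arzelà–Ascoli argument then implies that every limit point of the time-shifted interpolations $\bar{\omega}(\,\cdot + s)$ as $s \to \infty$ is a trajectory of $\dot{\omega} = g(\omega)$ lying in a compact set. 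Condition 3 identifies $\omega^*$ as the unique globally asymptotically stable equilibrium of that ODE, so the limit set of $\{\omega_t\}$ must reduce to $\{\omega^*\}$, giving $\omega_t \to \omega^*$ a.s.

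The principal obstacle is the stability stage. One must simultaneously control the deterministic drift (which depends on the varying scale $c = r(k)$) and the rescaled noise, uniformly over windows, and must calibrate the window length $T$ against the contraction rate of the $g_\infty$-flow so that a single window genuinely halves the rescaled norm. This is where condition 4 is used in an essential way; all other steps are comparatively routine once one has access to a Lipschitz drift, a square-summable martingale-difference noise, and an almost surely bounded trajectory.
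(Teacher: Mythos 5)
The paper does not prove this theorem: it is quoted verbatim from \cite[Chapter~2]{borkar2008stochastic} and used as a black box, so there is no in-paper proof to compare against. Your sketch is a faithful outline of the standard argument behind that result --- the Borkar--Meyn rescaling/windowing argument for almost-sure boundedness using condition 4, followed by the ODE method (interpolated trajectory, $L^2$-bounded martingale from condition 2 and $\sum_t \alpha_t^2 < \infty$, Arzel\`a--Ascoli, and identification of the limit set via the globally asymptotically stable equilibrium of condition 3) --- and I see no gap in it at the level of detail given.
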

The result we reproduced guarantees that if, on expectation, the stochastic approximation update is smooth, the noise is well-behaved, the underlying dynamical system is asymptotically stable and the iterates remain bounded, the stochastic approximation update converges to the solution of the underlying dynamical system.

The proof of the convergence of the proposed multi $Q$-learning appears later, on Section~\ref{sec:app:mql}, and heavily resorts on Theorem~\ref{theo:borkar:sa}. In the next section, we prove the results that appeared in Section~\ref{sec:mbo} of the main document. The results are useful properties of the multi-Bellman operator.
%
\setcounter{subsection}{2}
\subsection{Multi-Bellman operator}\label{sec:app:mbo}
In this section, we reproduce the theoretical results established in Section~\ref{sec:mbo} and present their proofs. The results refer to useful properties of the multi-Bellman operator.

We start with Lemma~\ref{prop:contraction-gamma-n} of the main paper, which establishes that the multi-Bellman operator with parameter $n$ is contractive and that the contraction improves exponentially as $n$ increases.
\setcounter{lemma}{0}
\begin{lemma}\label{lma:contract_inf}
    The operator $\mathbf{H}^n$ is a contraction in the $\infty$-norm with contraction factor $\gamma^n$.\end{lemma}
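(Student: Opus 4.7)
The plan is a clean induction on $n$, using the classical fact that the single-step Bellman operator $\mathbf{H}$ is a $\gamma$-contraction in the $\infty$-norm. The base case $n=1$ reduces to showing this well-known fact. The inductive step then follows from the compositional definition $\mathbf{H}^{n+1} = \mathbf{H}\circ\mathbf{H}^n$ together with the fact that a composition of an $a$-contraction and a $b$-contraction is an $ab$-contraction.

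For the base case, I would fix arbitrary $q_1, q_2 : \mathcal{X}\times\mathcal{A}\to\mathbb{R}$ and pick any $(x,a)\in\mathcal{X}\times\mathcal{A}$. By linearity of expectation and the fact that the reward term cancels,
\begin{align*}
\bigl|(\mathbf{H}q_1)(x,a) - (\mathbf{H}q_2)(x,a)\bigr|
= \gamma\,\Bigl|\mathbb{E}\bigl[\max_{a'}q_1(x',a') - \max_{a'}q_2(x',a')\bigr]\Bigr|.
\end{align*}
Apply Jensen's inequality (or monotonicity of expectation) and then the elementary bound $|\max_{a'} f(a') - \max_{a'} g(a')| \le \max_{a'}|f(a') - g(a')|$, which is the only nontrivial step and the one I would flag as the main obstacle if the reader is unfamiliar with it. This yields $\gamma\,\mathbb{E}[\max_{a'}|q_1(x',a') - q_2(x',a')|]$, which is in turn bounded by $\gamma\,\|q_1-q_2\|_\infty$. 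Taking the supremum over $(x,a)$ on the left gives $\|\mathbf{H}q_1 - \mathbf{H}q_2\|_\infty \le \gamma\,\|q_1 - q_2\|_\infty$.

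For the inductive step, suppose $\|\mathbf{H}^n q_1 - \mathbf{H}^n q_2\|_\infty \le \gamma^n\,\|q_1 - q_2\|_\infty$. Then, using the definition $\mathbf{H}^{n+1}q = \mathbf{H}(\mathbf{H}^n q)$ and the base-case contraction applied to the pair $\mathbf{H}^n q_1, \mathbf{H}^n q_2$,
\begin{align*}
\|\mathbf{H}^{n+1}q_1 - \mathbf{H}^{n+1}q_2\|_\infty
\le \gamma\,\|\mathbf{H}^n q_1 - \mathbf{H}^n q_2\|_\infty
\le \gamma^{n+1}\,\|q_1 - q_2\|_\infty,
\end{align*}
which closes the induction and establishes the lemma.

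The proof is essentially routine once one has the max-of-differences inequality in hand; there are no serious obstacles. The only subtlety worth double-checking is that the expectation in the definition of $\mathbf{H}^n$ is well-defined and that the interchange of $|\cdot|$ and $\mathbb{E}[\cdot]$ is valid, which is immediate since rewards are bounded and $\gamma\in[0,1)$, so all the iterated expectations are finite.
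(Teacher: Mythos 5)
Your proof is correct and follows essentially the same route as the paper's: an induction on $n$ whose base case establishes that $\mathbf{H}$ is a $\gamma$-contraction in the $\infty$-norm via Jensen's inequality and the bound $|\max_{a'} f - \max_{a'} g| \le \max_{a'}|f-g|$, and whose inductive step applies this contraction to the pair $\mathbf{H}^n q_1, \mathbf{H}^n q_2$. No gaps.
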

\begin{proof}
We establish the result by induction. First, we show that $\mathbf{H}$ contracts in the $\infty$-norm.
We start by writing the expression $\norm{ \mathbf{H}q - \mathbf{H}p }_\infty$ in simpler terms, where $q$ and $p$ are value functions.
\begin{align*}
     \norm{ \mathbf{H}q - \mathbf{H}p }_\infty &= \max_{x, a} \left| \mathbb{E}\left[ r(x, a) + \gamma \max_{a' \in \mathcal{A}} q\left(x', a'\right) \right] - \mathbb{E}\left[ r(x, a) + \gamma \max_{a' \in \mathcal{A}} p\left(x', a'\right) \right] \right| \\
     &= \gamma \max_{x, a} \left| \mathbb{E}\left[ \max_{a' \in \mathcal{A}} q\left(x', a'\right) - \max_{a' \in \mathcal{A}} p\left(x', a'\right) \right] \right|.
\end{align*}
Next, we make use of Jensen's inequality to establish that
\begin{align*}
     \norm{\mathbf{H}q - \mathbf{H}p }_\infty&\leq \gamma \max_{x, a} \mathbb{E}\left[ \left| \max_{a' \in \mathcal{A}} q\left(x', a'\right) - \max_{a' \in \mathcal{A}} p\left(x', a'\right) \right| \right].
\end{align*}
We use that the absolute difference of maxima is less or equal than the maxima of absolute difference.
\begin{align*}
     \norm{ \mathbf{H}q - \mathbf{H}p }_\infty &\leq \gamma \max_{x, a} \mathbb{E}\left[ \max_{a' \in \mathcal{A}} \left| q\left(x', a'\right) - p\left(x', a'\right) \right| \right].
\end{align*}
Then, we use that the expectation of a random function is less or equal than its maximum.
\begin{align*}
     \norm{ \mathbf{H}q - \mathbf{H}p }_\infty &\leq \gamma \max_{x' \in \mathcal{X}} \max_{a' \in \mathcal{A}} \left| q\left(x', a'\right) - p\left(x', a'\right) \right| \\
     &= \gamma \max_{x', a'}   \left| q\left(x', a'\right) - p\left(x', a'\right) \right| \\
     &= \gamma \norm{q-p}_\infty
\end{align*}
Now, we make the induction step for $\mathbf{H}^{n+1} = \mathbf{H}\mathbf{H}^n$, observing that
\begin{align*}
     \norm{ \mathbf{H}\left(\mathbf{H}^n q\right) - \mathbf{H}\left(\mathbf{H}^n p\right) }_\infty &\leq \gamma \norm{ \mathbf{H}^n q - \mathbf{H}^n p }_\infty \\
     & = \gamma \gamma^n \norm{ q - p }_\infty \\
     & = \gamma^{n+1} \norm{ q - p }_\infty.
\end{align*}
We conclude the proof.
\end{proof}
%
%
Now, we prove Proposition~\ref{prop:contraction:mbo}, stating the contractiveness of the projected multi-Bellman operator.
\setcounter{proposition}{0}
\begin{proposition}
    There exists $N \in \mathbb{N}$ such that, for all $n \geq N$, 
     %
     %
     $\proj \mathbf{H}^n$ is a contraction in the $\mu$-norm.
\end{proposition}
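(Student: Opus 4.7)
The plan is to exploit the two ingredients of $\proj\mathbf{H}^n$ separately. The projection $\proj$ is, by definition, the orthogonal projection onto the closed linear subspace $\mathcal{H}$ under the $\mu$-inner product, so it is non-expansive in the $\mu$-norm: $\norm{\proj f - \proj g}_\mu \leq \norm{f - g}_\mu$ for all $f,g$. It therefore suffices to show that $\mathbf{H}^n$ is itself eventually a contraction in the $\mu$-norm when restricted to inputs drawn from the finite-dimensional subspace $\mathcal{H}$.

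The bridge between the $\mu$-norm and the $\infty$-norm comes from two classical observations. First, $\norm{h}_\mu \leq \norm{h}_\infty$ because $\mu$ is a probability distribution, so combining this with Lemma~\ref{prop:contraction-gamma-n} gives $\norm{\mathbf{H}^n q_{\omega_1} - \mathbf{H}^n q_{\omega_2}}_\mu \leq \gamma^n \norm{q_{\omega_1}-q_{\omega_2}}_\infty$. Second, on the linear space $\mathcal{H}$ the two norms are equivalent: bounded features yield $\norm{q_\omega}_\infty \leq \phi_\mathrm{max}\norm{\omega}_2$, and Assumption~1 implies $\lambda_\mathrm{min}(\mathbb{E}_\mu[\phi\phi^T]) \geq 1/\sigma_\mathrm{max}$, hence $\norm{q_\omega}_\mu \geq \norm{\omega}_2/\sqrt{\sigma_\mathrm{max}}$. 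Chaining these inequalities delivers $\norm{q_{\omega_1}-q_{\omega_2}}_\infty \leq \phi_\mathrm{max}\sqrt{\sigma_\mathrm{max}}\,\norm{q_{\omega_1}-q_{\omega_2}}_\mu$ on $\mathcal{H}$.

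Putting the three pieces together, for any $q_{\omega_1},q_{\omega_2}\in\mathcal{H}$,
\[
    \norm{\proj(\mathbf{H}^n q_{\omega_1}) - \proj(\mathbf{H}^n q_{\omega_2})}_\mu \,\leq\, \gamma^n\,\phi_\mathrm{max}\sqrt{\sigma_\mathrm{max}}\,\norm{q_{\omega_1}-q_{\omega_2}}_\mu.
\]
Since $\gamma<1$, the coefficient on the right-hand side falls strictly below $1$ as soon as $n$ exceeds $N:=\lceil -\log_\gamma(\phi_\mathrm{max}\sqrt{\sigma_\mathrm{max}}) \rceil$, and this establishes the proposition.

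The main obstacle is the mismatch of the two natural norms: $\mathbf{H}^n$ contracts only in the $\infty$-norm while $\proj$ is only non-expansive in the $\mu$-norm, so one pays a norm-equivalence constant when passing from one side to the other. Using Assumption~2 instead, via $\norm{h}_\infty \leq \norm{h}_\mu/\sqrt{\mu_\mathrm{min}}$, would yield a constant closer to the $\sigma_\mathrm{max}\phi_\mathrm{max}^2/\mu_\mathrm{min}$ that reappears in the subsequent quality-of-solution proposition; for establishing mere existence of $N$, however, Assumption~1 alone suffices.
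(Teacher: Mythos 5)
Your proof is correct, and it takes a genuinely different route from the paper's. The paper never invokes non-expansiveness of $\proj$: it derives the mixed-norm Lipschitz bound $\norm{\proj q - \proj p} \leq \phi_\mathrm{max}^2\sigma_\mathrm{max}\norm{q-p}_\infty$ directly from the closed-form linear projection (Cauchy--Schwarz plus Jensen), composes it with Lemma~\ref{prop:contraction-gamma-n}, and then returns to the $\mu$-norm through an inequality of the form $\norm{q}_\infty \leq \tfrac{1}{\mu_\mathrm{min}}\norm{q}$, which requires the full-support constant $\mu_\mathrm{min}$ of Assumption~\ref{ass:data:iid}; its contraction modulus is $\tfrac{\phi_\mathrm{max}^2\sigma_\mathrm{max}}{\mu_\mathrm{min}}\gamma^n$. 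You instead use that $\proj$ is an orthogonal projection in the $\mu$-weighted $L^2$ space (hence non-expansive), the trivial bound $\norm{\cdot} \leq \norm{\cdot}_\infty$, and equivalence of the $\infty$- and $\mu$-norms on the finite-dimensional subspace $\mathcal{H}$ via $\lambda_\mathrm{min}\left(\mathbb{E}[\phi\phi^T]\right) = 1/\sigma_\mathrm{max}$. This buys two things: your argument uses only Assumption~1 (no appeal to $\mu_\mathrm{min}$, so it matches the hypotheses under which the proposition is actually stated), and your modulus $\phi_\mathrm{max}\sqrt{\sigma_\mathrm{max}}\,\gamma^n$ is never worse than the paper's, since $\phi_\mathrm{max}^2\sigma_\mathrm{max} \geq 1$ and $\mu_\mathrm{min}\leq 1$, giving a smaller admissible $N$. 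The price is that you establish the contraction only on $\mathcal{H}$ rather than on the whole space of bounded functions as the paper does; this is harmless for Corollary~\ref{cor:well}---$\proj\mathbf{H}^n$ maps every function into $\mathcal{H}$, so $\mathcal{H}$ is invariant and complete and any fixed point necessarily lies in it---but you should state that reduction explicitly when invoking Banach's theorem. (A shared cosmetic point: with $N=\lceil -\log_\gamma(\cdot)\rceil$ the modulus at $n=N$ can equal exactly $1$ when the logarithm is an integer; the paper has the same edge case, and taking $N$ one larger removes it.)
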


\begin{proof}
We want to show that there exists $\lambda: \mathbb{N} \to \mathbb{R}$ such that
\begin{align*}
    \norm{ \proj \left(\mathbf{H}^n q\right)  - \proj \left(\mathbf{H}^n p\right)} \leq \lambda(n) \norm{ q - p }
\end{align*}
and such that, for all $n$ greater than an existing $N$, $\lambda(n)$ is strictly smaller than 1.

We start by showing that $\proj$ satisfies
\begin{align*}
    \norm{ \proj q - \proj p} \leq \phi^2_{\mathrm{max}} \sigma_\mathrm{max} \norm{q-p}_\infty.
\end{align*}
To see that, we start by recalling that 
\begin{align*}
    \left(\proj q\right) (x, a) = \phi^T(x, a) \mathbb{E}\left[\phi(x, a) \phi^T(x, a)\right]^{-1} \mathbb{E}\left[ \phi(x, a) q(x, a) \right].
\end{align*}
Then, we have that 
\begin{align*}
    \norm{ \proj q - \proj p} = \sqrt{\mathbb{E}\left[\left(\phi^T(x, a) \mathbb{E}\left[\phi(x, a) \phi^T(x, a)\right]^{-1} \mathbb{E}\left[ \phi(x, a)\left( q(x, a) - p(x, a) \right)\right]\right)^2 \right]}.
\end{align*}
Now, we make use of a Cauchy-Schwarz inequality to say that
\begin{align*}
    \norm{ \proj q - \proj p} \leq \sqrt{\mathbb{E}\left[\norm{\phi^T(x, a) \mathbb{E}\left[\phi(x, a) \phi^T(x, a)\right]^{-1}}_2^2 \norm{\mathbb{E}\left[ \phi(x, a)\left( q(x, a) - p(x, a) \right)\right]}_2^2 \right]}
\end{align*}
and we can write
\begin{align*}
    \norm{ \proj q - \proj p} &\leq \sqrt{\mathbb{E}\left[\norm{\phi^T(x, a) \mathbb{E}\left[\phi(x, a) \phi^T(x, a)\right]^{-1}}_2^2 \right]} \sqrt{\norm{\mathbb{E}\left[ \phi(x, a) \left(q(x, a) - p(x, a) \right)\right]}_2^2} \\
    &= \sqrt{\mathbb{E}\left[\norm{\phi^T(x, a) \mathbb{E}\left[\phi(x, a) \phi^T(x, a)\right]^{-1}}_2^2 \right]} \norm{\mathbb{E}\left[ \phi(x, a)\left( q(x, a) - p(x, a) \right)\right]}_2
\end{align*}
Now, we use the definition of the matrix norm induced by the euclidean norm in $\mathbb{R}^k$ for the first term and Jensen's inequality. We obtain that
\begin{align*}
    \norm{ \proj q - \proj p} &\leq \sqrt{\mathbb{E}\left[\norm{\phi(x, a)}_2^2 \norm{\mathbb{E}\left[\phi(x, a) \phi^T(x, a)\right]^{-1}}_2^2 \right]}\mathbb{E}\left[\norm{ \phi(x, a) \left(q(x, a) - p(x, a)\right)}_2 \right] \\
    &\leq \sqrt{ \phi^2_\mathrm{max} \sigma_\mathrm{max}^2} \cdot \mathbb{E}\left[\norm{ \phi(x, a) }_2 \left| q(x, a) - p(x, a)\right| \right] \\
    &\leq \phi_\mathrm{max}^2 \sigma_\mathrm{max} \norm{q - p}_\infty.
\end{align*}

To conclude the proof, we start by combining the inequality we just established and Lemma~\ref{lma:contract_inf}.
\begin{align*}
    \norm{ \proj \left(\mathbf{H}^n q \right) - \proj \left(\mathbf{H}^n p \right)} &\leq \phi^2_\mathrm{max} \sigma_\mathrm{max}  \norm{ \mathbf{H}^n q - \mathbf{H}^n p }_\infty \\
    & \leq \phi^2_\mathrm{max} \sigma_\mathrm{max} \gamma^n  \norm{ q - p }_\infty
\end{align*}
Let us consider $\mu_\mathrm{min}$ the probability of the least likely state-action pair according to distribution $\mu$, assuming all state-action pairs have a non-zero probability of being visited---that is bigger or equal than $\mu_\mathrm{min}$ (we are defining the constant in revised Assumption~\ref{ass:data:iid} of the revised paper). Finally, we use that $\norm{q}_\infty \leq \frac{1}{\mu_\mathrm{min}}\norm{q}$ and finish the proof by using $\lambda(n) = \frac{\phi^2_\mathrm{max} \sigma_\mathrm{max}}{\mu_\mathrm{min}}\gamma^n$. It is true that $\lambda(n) < 1$ for all $n \geq N$ when we consider $N = \lceil - \log_\gamma \left( \frac{\phi_\mathrm{max}^2\sigma_\mathrm{max}}{\mu_\mathrm{min}} \right) \rceil $.\footnote{$\lceil \cdot \rceil : \mathbb{R}^+ \to \mathbb{N}$ is the ceiling function, giving the smallest natural that is larger or equal to its argument.}
\end{proof}
\begin{proposition}
    For all $n\geq N$, where $N$ is identified in Corollary~\ref{prop:contraction:mbo},  $\tilde{\omega}^n$ is such that
    \begin{align}
        \norm{ q^* - q_{\tilde{\omega}^n} } \leq \frac{1}{1 - \frac{\sigma_\mathrm{max}\phi_\mathrm{max}^2}{\mu_\mathrm{min}}\gamma^n}\norm{ q^* - q_{\omega^*} },
    \end{align}
where $\sigma_\mathrm{max}=\norm{\mathbb{E} \left[ \phi(x, a) \phi^T(x, a) \right]^{-1}}_2$ and $\phi_\mathrm{max}=\max_{x, a}\norm{\phi(x, a)}_2$.
\end{proposition}
In Proposition~\ref{cor:qual} of the main text submitted, the constant $\mu_\mathrm{min}$ is missing. We are adding it in the revised version of the manuscript.

\begin{proof}
Let us use the triangle inequality with vertex $q_{\omega^*}$.
\begin{align*}
    \norm{ q^* - q_{\tilde{\omega}^n} } \leq \norm{ q^* - q_{\omega^*} } + \norm{ q_{\omega^*} - q_{\tilde{\omega}^n} }.
\end{align*}
Now we focus on the second term of the right-hand side above. We have that
\begin{align*}
    \norm{ q_{\omega^*} - q_{\tilde{\omega}^n} } &= \norm{ \proj \left( \mathbf{H}^n q^* \right) - \proj\left( \mathbf{H}^n q_{\tilde{\omega}^n} \right) }.
\end{align*}
From Proposition~\ref{prop:contraction:mbo}, we have that
\begin{align*}
    \norm{ q_{\omega^*} - q_{\tilde{\omega}^n} } &\leq  \frac{\sigma_\mathrm{max} \phi_\mathrm{max}^2}{\mu_\mathrm{min}}\gamma^n \norm{  q^* -  q_{\tilde{\omega}^n} }.
\end{align*}
We can conclude the result taking algebraic operations.
\end{proof}

%
\subsection{Multi $Q$-learning}\label{sec:app:mql}

\setcounter{theorem}{0}
In this section, we prove Theorem~\ref{theo:convergence:msql}, that we reproduce below for convenience.
\begin{theorem}
    There exists $N$ such that, for all $n$ larger than $N$, the sequence of $\omega_t^n$ is such that
    \begin{align*}
        \lim_{t \to \infty} \norm{\tilde{\omega}^n - \omega_t^n}_2 = 0.
    \end{align*}
\end{theorem}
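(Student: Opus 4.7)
The plan is to verify the four hypotheses of the general stochastic-approximation Theorem~\ref{theo:borkar:sa} for the expected update
\begin{align*}
    g(\omega) = \mathbb{E}\left[\phi(x,a)\left(\tau^n(\omega) - q_\omega(x,a)\right)\right],
\end{align*}
after fixing $n \geq N$ with $N$ from Proposition~\ref{prop:contraction:mbo}, and then conclude by invoking that theorem.

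First I would dispatch the two routine conditions. Lipschitzness of $g$ reduces to Lipschitzness of $\omega \mapsto (\mathbf{H}^n q_\omega)(x,a)$ in $\omega$; since expectations and maxima are $1$-Lipschitz in the $\infty$-norm and $\norm{q_\omega - q_{\omega'}}_\infty \leq \phi_\mathrm{max}\norm{\omega-\omega'}_2$, the target, and hence $g$, is Lipschitz. For the noise $\{m_t\}$, Assumption~\ref{ass:data:iid} gives $\mathbb{E}\left[m_{t+1} \mid \{(\tau_s,\omega_s):s\leq t\}\right] = 0$, so the sequence is a martingale difference; the quadratic bound on its conditional second moment follows from bounded rewards, bounded features and the linear dependence of $q_{\omega_t}$ on $\omega_t$.

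The heart of the proof is condition~3. Existence and uniqueness of a zero $\tilde\omega^n$ of $g$ follow from Corollary~\ref{cor:well}, since the normal equation $\omega = \mathbb{E}\left[\phi\phi^T\right]^{-1}\mathbb{E}\left[\phi\,\tau^n(\omega)\right]$ is equivalent to the fixed-point equation for $\proj \mathbf{H}^n$. For global asymptotic stability of $\dot\omega = g(\omega)$, let $T(\omega)$ denote the unique parameter with $q_{T(\omega)} = \proj\mathbf{H}^n q_\omega$, so that $g(\omega) = \Sigma\left(T(\omega) - \omega\right)$ with $\Sigma = \mathbb{E}[\phi\phi^T]$. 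Using the Lyapunov function $V(\omega) = \tfrac{1}{2}\norm{\omega - \tilde\omega^n}_2^2$ and writing $T(\omega) - \omega = (T(\omega) - T(\tilde\omega^n)) - (\omega - \tilde\omega^n)$, Cauchy--Schwarz in the $\Sigma$-inner product together with the contraction factor $\lambda = \frac{\sigma_\mathrm{max}\phi_\mathrm{max}^2}{\mu_\mathrm{min}}\gamma^n < 1$ of Proposition~\ref{prop:contraction:mbo} yields $\dot V \leq -(1-\lambda)\norm{q_\omega - q_{\tilde\omega^n}}_\mu^2$. Since $\Sigma$ is positive definite by Assumption~1, $\dot V$ is bounded above by a negative multiple of $V$ away from $\tilde\omega^n$, giving exponential stability.

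Condition~4 is then handled by noting that reward contributions to $g_c(\omega) = g(c\omega)/c$ scale as $O(1/c)$, so $g_c$ converges uniformly on compacts to the reward-free analogue $g_\infty$, obtained by dropping all reward terms in the target. The same Lyapunov argument, now centered at the origin---the unique fixed point when rewards vanish---and with the same contraction factor $\lambda$, gives global asymptotic stability of $\dot\omega = g_\infty(\omega)$. The main obstacle I expect is the Lyapunov step in condition~3: one must carefully exploit the $\mu$-norm contractivity of $\proj \mathbf{H}^n$ on functions via the $\Sigma$-geometry on parameters, which hinges crucially on the invertibility of $\Sigma$ to ensure that $q_\omega = q_{\tilde\omega^n}$ forces $\omega = \tilde\omega^n$; the rest of the verification is mostly bookkeeping to match Borkar's hypotheses.
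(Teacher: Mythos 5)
Your proposal is correct and follows essentially the same route as the paper: fix $n\geq N$ from Proposition~\ref{prop:contraction:mbo}, verify the four hypotheses of Theorem~\ref{theo:borkar:sa} (Lipschitz expected update, martingale-difference noise with quadratic variance bound, existence/uniqueness of $\tilde{\omega}^n$ via Corollary~\ref{cor:well} and Banach's theorem, global asymptotic stability via a Lyapunov argument with $V(\omega)=\tfrac{1}{2}\norm{\omega-\tilde{\omega}^n}_2^2$, and the scaling condition for boundedness), then invoke that theorem. Your execution of conditions~3 and~4 is in fact a slightly cleaner variant of the paper's---the Cauchy--Schwarz step in the $\Sigma$-inner product in place of the paper's eigenvalue estimate, and a $g_\infty$ that keeps the reward-free lookahead term rather than dropping the target altogether---but these are refinements within the same Banach-plus-Lyapunov argument, not a different approach.
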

To establish the result, we verify that we are in the conditions where Theorem~\ref{theo:borkar:sa} holds.
We do so by proving the following lemmas, one for each of the asumptions of the theorem.
\begin{lemma}\label{lemma:lip}
    The map $g: \mathbb{R}^k \to \mathbb{R}^k$ such that
    \begin{align*}
        g(\omega) = \mathbb{E} \left[ \phi(x, a) \left( \tau^n(\omega) - q_{\omega^n}(x, a) \right) \right]
    \end{align*}
    is Lipschitz.
\end{lemma}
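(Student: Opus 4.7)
The plan is to decompose $g(\omega_1) - g(\omega_2)$ into the part coming from the target and the part coming from the approximator, and bound each separately. First I would observe that, by the definition of $\tau^n_{t+1}$ and Assumption~\ref{ass:data:iid}, we have $\tau^n(\omega)(x,a) = (\mathbf{H}^n q_\omega)(x,a)$. Writing the difference as
\begin{align*}
g(\omega_1) - g(\omega_2) = \mathbb{E}\!\left[\phi(x,a)\bigl((\mathbf{H}^n q_{\omega_1})(x,a) - (\mathbf{H}^n q_{\omega_2})(x,a)\bigr)\right] - \mathbb{E}\!\left[\phi(x,a)\phi^T(x,a)\right](\omega_1 - \omega_2),
\end{align*}
separates the two sources of dependence on $\omega$ cleanly.

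For the second term, the triangle inequality and the operator-norm bound $\|\mathbb{E}[\phi\phi^T]\|_2 \leq \phi_\mathrm{max}^2$ immediately give a contribution of at most $\phi_\mathrm{max}^2 \|\omega_1 - \omega_2\|_2$. For the first term, I would apply Jensen's inequality followed by Cauchy-Schwarz to pull out $\|\phi(x,a)\|_2 \leq \phi_\mathrm{max}$, reducing the bound to $\phi_\mathrm{max}\, \|\mathbf{H}^n q_{\omega_1} - \mathbf{H}^n q_{\omega_2}\|_\infty$. Lemma~\ref{lma:contract_inf} then yields $\gamma^n \|q_{\omega_1} - q_{\omega_2}\|_\infty$, and since $|q_{\omega_1}(x,a) - q_{\omega_2}(x,a)| = |\phi^T(x,a)(\omega_1 - \omega_2)| \leq \phi_\mathrm{max} \|\omega_1 - \omega_2\|_2$ uniformly in $(x,a)$, the first term is bounded by $\gamma^n \phi_\mathrm{max}^2 \|\omega_1 - \omega_2\|_2$. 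Combining the two bounds produces
\begin{align*}
\|g(\omega_1) - g(\omega_2)\|_2 \leq (1 + \gamma^n)\,\phi_\mathrm{max}^2\,\|\omega_1 - \omega_2\|_2,
\end{align*}
so $g$ is Lipschitz with constant $(1+\gamma^n)\phi_\mathrm{max}^2$.

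Most of the work is routine; the one spot demanding care is ensuring that the $\infty$-norm contraction of $\mathbf{H}^n$ from Lemma~\ref{lma:contract_inf} combines properly with the expectation over $(x,a)$ weighted by $\phi(x,a)$. Handling this requires a vector-valued Jensen step rather than a scalar one, which is why I route the bound through $\phi_\mathrm{max}$ and the $\infty$-norm of the Bellman residual instead of trying to keep everything in the $\mu$-norm. The invertibility assumption on $\mathbb{E}[\phi\phi^T]$ is not needed for this lemma, but $\phi_\mathrm{max}$ being finite (implicit in the setting where $\mathcal{X}\times\mathcal{A}$ is finite and $\phi$ bounded) is essential.
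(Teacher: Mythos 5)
Your proof is correct and follows essentially the same route as the paper's: split $g(\omega_1)-g(\omega_2)$ into the target term and the linear approximator term, bound the target term via Jensen/Cauchy--Schwarz and the $\gamma^n$ contraction of $\mathbf{H}^n$ in the $\infty$-norm (Lemma~\ref{lma:contract_inf}), and convert $\norm{q_{\omega_1}-q_{\omega_2}}_\infty \leq \phi_\mathrm{max}\norm{\omega_1-\omega_2}_2$. Your constant $(1+\gamma^n)\phi_\mathrm{max}^2$ is in fact cleaner than the paper's (which detours through the projection and picks up $\sigma_\mathrm{max}/\mu_\mathrm{min}$ factors), but since only finiteness of the Lipschitz constant matters, the two arguments are substantively identical.
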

\begin{proof}
    We start by writing
    \begin{align*}
        \norm{ g( \omega ) - g( \theta ) }_2 &= \norm{ \mathbb{E} \left[ \phi(x, a) \left( \tau^n(\omega) - q_{\omega}(x, a) \right) \right] - \mathbb{E} \left[ \phi(x, a) \left( \tau^n(\theta) - q_{\theta}(x, a) \right) \right] }_2 \\
        &\leq \norm{ \mathbb{E} \left[ \phi(x, a) \left(\tau^n(\omega) - \tau^n(\theta)\right) \right] }_2 + \norm{ \mathbb{E} \left[ \phi(x, a) \left(q_{\theta}(x, a) - q_{\omega}(x, a) \right) \right] }_2.
    \end{align*}
    For the first term, we can observe that it equals.
    \begin{align*}
        \norm{\mathbb{E}\left[ \phi(x, a) \phi^T(x, a) \right]\left( \proj \left( \mathbf{H}^n q_\omega \right) - \proj \left(\mathbf{H}^n q_\theta \right)\right)}_2
    \end{align*}
    Using Jensen's and Cauchy-Schwarz inequality and Proposition~\ref{prop:contraction-gamma-n}, we conclude that
    \begin{align*}
        \norm{ \mathbb{E} \left[ \phi(x, a) \left(\tau^n(\omega) - \tau^n(\theta)\right) \right] }_2 &\leq  \mathbb{E}\left[ \norm{\phi(x, a) }_2 \left|\tau^n(\omega) - \tau^n(\theta)\right| \right] \\ 
        &\leq \sigma_\mathrm{max}\phi^2_\mathrm{max} \gamma^n \norm{ q_\omega - q_\theta }_\infty \\
        &\leq \frac{\sigma_\mathrm{max}\phi^3_\mathrm{max}}{\mu_\mathrm{min}} \gamma^n \norm{ \omega - \theta}_2
    \end{align*}
    Now we can take care of the second term by making use of Jensen's and Cauchy-Schwarz' inequality.
    \begin{align*}
        \norm{ \mathbb{E} \left[ \phi(x, a) \left(q_{\theta}(x, a) - q_{\omega}(x, a) \right) \right] }_2 &\leq  \mathbb{E} \left[ \norm{ \phi(x, a) }_2 \left| q_{\theta}(x, a) - q_{\omega}(x, a) \right| \right]\\
        &\leq \phi_\mathrm{max} \norm{ q_{\theta} - q_{\omega}}_\infty \\
        &\leq  \frac{\phi_\mathrm{max}^2}{\mu_\mathrm{min}} \norm{ \omega - \theta}_2.
    \end{align*}
    Finally, by considering the sum of the coefficients from the first and second terms,  we have that
    \begin{align*}
        \norm{ g( \omega ) - g( \theta ) }_2 &\leq \frac{\sigma \phi^3_\mathrm{max} \gamma^n + \phi^2_\mathrm{max}}{\mu_\mathrm{min}} \norm{ \omega - \theta}_2,
    \end{align*}
    or, equivalently, that $g$ is Lipschitz continuous.
\end{proof}
\begin{lemma}\label{lemma:martingale}
    The sequence of $m_{t}$ such that 
    \begin{align*}
        m_{t+1} = \phi(x_t, a_t)\left(\tau^n_{t+1} - g_{\omega_t}(x_t, a_t)\right) - g(\omega_t)
    \end{align*}
    is a martingale difference sequence with respect to $\{(\tau^n_s, \omega_s): s \leq t\}$ and is such that
    \begin{align*}
        \mathbb{E}\left[ \norm{ m_{t+1} }^2 \mid \{(\tau^n_s, \omega_s): s \leq t\}  \right] \leq c_m \left( 1 + \norm{ \omega_t }^2 \right).
    \end{align*}
\end{lemma}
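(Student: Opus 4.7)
The plan is to verify the two properties separately, then combine them. For the martingale difference property, I would first fix the filtration $\mathcal{F}_t = \sigma(\{(\tau^n_s,\omega_s) : s \leq t\})$ and observe that $\omega_t$ is $\mathcal{F}_t$-measurable while, by Assumption~\ref{ass:data:iid}, the sampled pair $(x_t,a_t)$ and the subsequent $n$-step rollout used to form $\tau^n_{t+1}$ are independent of $\mathcal{F}_t$ given $\omega_t$. Since, by construction, $\tau^n(\omega)=\mathbb{E}[\tau^n_{t+1}\mid x_t,a_t,\omega_t=\omega]$ samples the entry $(\mathbf{H}^n q_\omega)(x_t,a_t)$ in expectation, conditioning gives $\mathbb{E}[\phi(x_t,a_t)(\tau^n_{t+1}-q_{\omega_t}(x_t,a_t))\mid\mathcal{F}_t]=g(\omega_t)$, and therefore $\mathbb{E}[m_{t+1}\mid\mathcal{F}_t]=0$. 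Measurability of $m_{t+1}$ with respect to $\mathcal{F}_{t+1}$ is immediate from its definition.

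For the second moment bound, I would start with the elementary inequality $\|a-b\|_2^2\leq 2\|a\|_2^2+2\|b\|_2^2$ applied to the two summands of $m_{t+1}$, reducing the task to bounding $\mathbb{E}[\|\phi(x_t,a_t)(\tau^n_{t+1}-q_{\omega_t}(x_t,a_t))\|_2^2\mid\mathcal{F}_t]$ and $\|g(\omega_t)\|_2^2$ individually. The feature norm is uniformly bounded by $\phi_\mathrm{max}$, and $|q_{\omega_t}(x_t,a_t)|\leq\phi_\mathrm{max}\|\omega_t\|_2$ by Cauchy--Schwarz, so the $q_{\omega_t}$ contribution is $O(\phi_\mathrm{max}^4\|\omega_t\|_2^2)$. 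The $\|g(\omega_t)\|_2^2$ term can be bounded using Lemma~\ref{lemma:lip}: since $g$ is Lipschitz, $\|g(\omega_t)\|_2\leq\|g(0)\|_2+L\|\omega_t\|_2$, giving a bound of the desired form $c(1+\|\omega_t\|_2^2)$.

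The main obstacle is the target term $\tau^n_{t+1}=r_t+\gamma\max_{\bar a}[r_{t+1}+\cdots+\gamma^{n-1}q_{\omega_t}(x_{t+n},a_{t+n})]$, because it involves a maximum over $|\mathcal{A}|^n$ candidate $n$-step rollouts. I would handle this by absorbing the maximum inside absolute values: using $|\max_\alpha f(\alpha)|\leq\max_\alpha|f(\alpha)|\leq\sum_\alpha|f(\alpha)|$ together with boundedness of the reward (the rewards lie in a bounded subset of $\mathbb{R}$, so $|r_s|\leq R_\mathrm{max}$) gives
\begin{align*}
|\tau^n_{t+1}|\leq\sum_{i=0}^{n-1}\gamma^i R_\mathrm{max}+\gamma^{n-1}\phi_\mathrm{max}\|\omega_t\|_2\leq\frac{R_\mathrm{max}}{1-\gamma}+\gamma^{n-1}\phi_\mathrm{max}\|\omega_t\|_2,
\end{align*}
a deterministic bound valid for every realization of the rollout. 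Squaring and applying $(a+b)^2\leq 2a^2+2b^2$ yields $\mathbb{E}[\tau^n_{t+1}{}^2\mid\mathcal{F}_t]\leq c_1+c_2\|\omega_t\|_2^2$ for constants depending only on $R_\mathrm{max}$, $\gamma$, $\phi_\mathrm{max}$, and $n$.

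Combining the three bounds and taking $c_m$ to be the maximum of the resulting coefficients yields $\mathbb{E}[\|m_{t+1}\|_2^2\mid\mathcal{F}_t]\leq c_m(1+\|\omega_t\|_2^2)$, completing the proof. I expect the only subtle point to be the $n$-step maximum, which is why I emphasize that the bound above is pointwise in the rollout and hence survives the max operator; otherwise the argument is a routine application of Cauchy--Schwarz and the Lipschitz estimate from Lemma~\ref{lemma:lip}.
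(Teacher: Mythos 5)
Your proposal is correct and takes essentially the same route as the paper's (very terse) proof: the i.i.d. sampling assumption yields the zero conditional mean, and boundedness of rewards and features together with the linear dependence of $q_{\omega_t}$ and $\tau^n_{t+1}$ on $\omega_t$ yields the conditional second-moment bound $c_m\left(1+\norm{\omega_t}^2\right)$. Your write-up merely supplies the explicit estimates (the pointwise bound that survives the max over rollouts, the $\norm{a-b}^2\leq 2\norm{a}^2+2\norm{b}^2$ split, and the Lipschitz bound on $g$) that the paper dismisses as ``evident.''
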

\begin{proof}
A martingale difference sequence has zero expectation conditioned on the past. Such condition is evident once we consider Assumption~\ref{ass:data:iid}. Then, a martingale difference sequence also has finite first moment. That becomes evident when we observe that every term on the definition is bounded. Finally, since, again, every term on the definition is also bounded, the second moment is bounded.
\end{proof}
\begin{lemma}\label{lemma:ode}
The o.d.e
    \begin{align*}
        \dot{\omega} = g(\omega)
    \end{align*}
    has a unique and globally asymptotically stable equilibrium $\omega^*$ such that
    \begin{align*}
        \omega^* = \mathbb{E}\left[ \phi\left(x, a\right) \phi^T\left( x, a \right) \right]^{-1} \mathbb{E}\left[ \phi\left(x, a\right)\tau^n(\omega^*)  \right].
    \end{align*}
\end{lemma}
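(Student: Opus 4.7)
The plan is to rewrite $g$ so its zeros coincide with fixed points of $\proj\mathbf{H}^n$ on the parameter side, invoke Corollary~\ref{cor:well} for existence and uniqueness, and then set up a Lyapunov argument for global asymptotic stability. Writing $\Sigma = \mathbb{E}[\phi(x,a)\phi^T(x,a)]$ and letting $F:\mathbb{R}^k\to\mathbb{R}^k$ be the map that sends $\omega$ to the unique parameter vector with $q_{F(\omega)} = \proj(\mathbf{H}^n q_\omega)$, the identity $\tau^n(\omega) = (\mathbf{H}^n q_\omega)(x,a)$ together with Assumption~1 gives $\mathbb{E}[\phi(x,a)\tau^n(\omega)] = \Sigma F(\omega)$, so $g(\omega) = \Sigma(F(\omega)-\omega)$. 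Because $\Sigma$ is invertible, $g(\omega^*)=0$ is equivalent to $F(\omega^*)=\omega^*$, i.e.\ to equation~\eqref{fp:pmbo}, whose unique solution $\tilde{\omega}^n$ (for every $n\geq N$) is supplied by Corollary~\ref{cor:well}; rearranging gives the stated closed form $\omega^* = \Sigma^{-1}\mathbb{E}[\phi(x,a)\tau^n(\omega^*)]$.

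For global asymptotic stability I would use the quadratic Lyapunov function $V(\omega) = \frac{1}{2}\norm{\omega-\omega^*}_2^2$. Differentiating along the o.d.e.\ and exploiting $F(\omega^*)=\omega^*$ yields
\begin{align*}
\dot V(\omega) = (\omega-\omega^*)^T\Sigma\bigl(F(\omega)-F(\omega^*)\bigr) - (\omega-\omega^*)^T\Sigma(\omega-\omega^*).
\end{align*}
The second term is strictly negative off $\omega^*$ by positive-definiteness of $\Sigma$. To control the first term, I would use the isometry $\norm{q_\omega-q_{\omega'}}^2 = (\omega-\omega')^T\Sigma(\omega-\omega')$ linking the $\mu$-norm on $q$-functions with the $\Sigma$-weighted norm on parameters; combined with Proposition~\ref{prop:contraction:mbo} this gives $\norm{q_{F(\omega)}-q_{F(\omega^*)}}\leq \lambda(n)\norm{q_\omega-q_{\omega^*}}$ with $\lambda(n)=\frac{\phi_\mathrm{max}^2\sigma_\mathrm{max}}{\mu_\mathrm{min}}\gamma^n<1$. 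Cauchy-Schwarz in the $\Sigma$-inner product then bounds the first term by $\lambda(n)\,(\omega-\omega^*)^T\Sigma(\omega-\omega^*)$, so
\begin{align*}
\dot V(\omega) \leq -(1-\lambda(n))\,(\omega-\omega^*)^T\Sigma(\omega-\omega^*) \leq -\frac{2(1-\lambda(n))}{\sigma_\mathrm{max}}\,V(\omega),
\end{align*}
using $\sigma_\mathrm{min}(\Sigma) = 1/\sigma_\mathrm{max}$ in the last step. This is strictly negative off $\omega^*$ and yields exponential decay of $V$, hence global asymptotic stability.

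The main obstacle I anticipate is not the Lyapunov computation itself but the careful bookkeeping across three distinct norms: the Euclidean norm on $\mathbb{R}^k$ in which the o.d.e.\ lives, the $\mu$-norm on value functions in which Proposition~\ref{prop:contraction:mbo} provides contractivity, and the $\Sigma$-weighted norm on $\mathbb{R}^k$ in which the Lyapunov decrease is cleanest. Once the parameter-to-function isometry $\norm{q_\omega-q_{\omega'}}^2 = (\omega-\omega')^T\Sigma(\omega-\omega')$ is made explicit and paired with the $\mu$-norm contraction of Proposition~\ref{prop:contraction:mbo}, the passage from $\mu$-norm contractivity to Euclidean exponential decay is a standard Cauchy-Schwarz step followed by a lower eigenvalue bound on $\Sigma$, and is otherwise routine.
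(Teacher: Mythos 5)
Your proposal is correct, and for the stability half it takes a genuinely different (and cleaner) route than the paper. The paper also uses the quadratic Lyapunov function $\frac{1}{2}\norm{\omega^*-\omega}_2^2$, but it decomposes $g(\omega)-g(\omega^*)$ into a target term and a value term, bounds the target term in the Euclidean norm via Jensen/Cauchy--Schwarz and the $\infty$-norm contraction of $\mathbf{H}^n$ (Lemma~\ref{prop:contraction-gamma-n}), and then compares the resulting coefficient $\frac{\phi_\mathrm{max}^2}{\mu_\mathrm{min}}\gamma^n$ against an eigenvalue of the covariance matrix, concluding negativity only ``for $n$ large enough'' (and, as written, it quotes $\lambda_\mathrm{max}$ where only $\lambda_\mathrm{min}$ actually lower-bounds the quadratic form). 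You instead rewrite $g(\omega)=\Sigma(F(\omega)-\omega)$ with $q_{F(\omega)}=\proj(\mathbf{H}^n q_\omega)$, work in the $\Sigma$-inner product, and exploit the isometry $(\omega-\omega')^T\Sigma(\omega-\omega')=\norm{q_\omega-q_{\omega'}}^2$ so that the $\mu$-norm contraction of Proposition~\ref{prop:contraction:mbo} applies verbatim to the cross term; this yields $\dot V\leq -(1-\lambda(n))(\omega-\omega^*)^T\Sigma(\omega-\omega^*)$, hence negativity for exactly the same $N$ as Corollary~\ref{cor:well}, plus an explicit exponential decay rate $2(1-\lambda(n))/\sigma_\mathrm{max}$ that the paper does not provide. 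The only caveat, shared with the paper, is the identification $\tau^n(\omega)=(\mathbf{H}^n q_\omega)(x,a)$ (conditional expectation of the sampled tree target equals the multi-Bellman operator), which both arguments take as given; granted that, your existence/uniqueness step via Corollary~\ref{cor:well} and the closed form $\omega^*=\Sigma^{-1}\mathbb{E}[\phi(x,a)\tau^n(\omega^*)]$ match the paper's.
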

\begin{proof}
We already know that the equilibrium $\omega^*$ exists and is unique.
Let us consider a Lyapunov function $l: \mathbb{R}^k  \to \mathbb{R}$ such that $l(\omega) = \frac{1}{2}\norm{\omega^* - \omega}^2$. Existence of $\omega^*$ is guaranteed from Banach's fixed-point theorem and Proposition~\ref{prop:contraction:mbo}. It is clear that $l(\omega) = 0$ if $\omega = \omega^*$ and that $l(\omega) > 0$ if $\omega \neq \omega^*$. Now, we show that $\dot{l}(\omega) < 0$ if $\omega \neq \omega^*$.

We start by noticing that $\dot{l}(\omega) = \nabla l \cdot \dot{\omega}$. Since $\nabla l = - \left( \omega^* - \omega \right) $ and $\dot{\omega} = g(\omega)$, we have that
\begin{align*}
    \dot{l}(\omega) = - \left( \omega^* - \omega \right) g(\omega).
\end{align*}
Since $g(\omega^*) = 0$, it is also true that
\begin{align*}
    \dot{l}(\omega) &= - \left( \omega^* - \omega \right) \left(g\left(\omega\right) - g\left(\omega^*\right) \right)
\end{align*}
For now, let us focus on the second term of the right hand side and write
\begin{align*}
    g(\omega) - g(\omega^*) &= \mathbb{E} \left[ \phi(x, a) \left( \tau^n(\omega) - \tau^n(\omega^*) \right) \right] - \mathbb{E} \left[ \phi(x, a) \left( q_{\omega^*}(x, a) - q_{\omega}(x, a) \right) \right]
\end{align*}
Let us make two assertions, one for each of the terms on the right-hand side of the equation. First,
\begin{align*}
    \norm{\mathbb{E} \left[ \phi(x, a) \left( \tau^n(\omega) - \tau^n(\omega^*) \right) \right]}_2 = \norm{\mathbb{E} \left[ \phi(x, a) \left( \tau^n(\omega^*) - \tau^n(\omega) \right) \right]}_2.
\end{align*}
Using Jensen's and Cauchy-Schwarz' inequality, we get that
\begin{align*}
    \norm{\mathbb{E} \left[ \phi(x, a) \left( \tau^n(\omega) - \tau^n(\omega^*) \right) \right]}_2 &\leq \mathbb{E}\left[ \norm{\phi(x, a)}_2 \left| \tau^n(\omega^*) - \tau^n(\omega) \right| \right]
\end{align*}
and, using the fact that $\left| \max_{z} f(z) - \max_{z} g(z) \right| \leq \max_{z} \left| f(z) - g(z) \right|$, consequently that
\begin{align*}
    \norm{\mathbb{E} \left[ \phi(x, a) \left( \tau^n(\omega) - \tau^n(\omega^*) \right) \right]}_2 &\leq \phi_\mathrm{max} \gamma^n \norm{q_{\omega^*} - q_\omega}_\infty \\
    &\leq \frac{\phi_\mathrm{max}^2}{\mu_\mathrm{min}} \gamma^n \norm{\omega^* - \omega}
\end{align*}
Second, we have that
\begin{align*}
    \mathbb{E} \left[ \phi(x, a) \left( q_{\omega^*}(x, a) - q_{\omega}(x, a) \right) \right] = \mathbb{E}\left[ \phi(x, a) \phi^T(x, a) \right] \left( \omega^* - \omega \right).
\end{align*}
Having established the two assertions mentioned, we make use of them in the following.
\begin{align*}
    \dot{l}(\omega) &= - (\omega^* - \omega) \mathbb{E} \left[ \phi(x, a) \left( \tau^n(\omega) - \tau^n(\omega^*) \right) \right] -\\
    & \qquad \qquad - (\omega^*  - \omega) \mathbb{E}\left[ \phi(x, a) \phi^T(x, a) \right] (\omega^* - \omega) \\
    &\leq \norm{\left( \omega^* - \omega \right) \mathbb{E} \left[ \phi(x, a) \left( \tau^n(\omega) - \tau^n(\omega^*) \right) \right]}_2 -\\
    & \qquad \qquad - (\omega^*  - \omega) \mathbb{E}\left[ \phi(x, a) \phi^T(x, a) \right] (\omega^* - \omega) \\
    & \leq \frac{\phi_\mathrm{max}^2}{\mu_\mathrm{min}} \gamma^n \norm{\omega^* - \omega}^2_2 - \lambda_{\mathrm{max}} \norm{\omega^* - \omega}^2_2 \\
    &= \left( \frac{\phi_\mathrm{max}^2}{\mu_\mathrm{min}} \gamma^n - \lambda_\mathrm{max}\right) \norm{\omega^* - \omega}_2^2,
\end{align*}
where $\lambda_{\mathrm{max}}$ is the largest eigenvalue of the covariance matrix $\mathbb{E}\left[ \phi(x, a) \phi^T(x, a) \right]$.
The expression is negative if n is large enough. That concludes the proof.

\end{proof}
%
%
\begin{lemma}\label{lemma:bound}
    The map $g_c : \mathbb{R}^k \to \mathbb{R}^k$ such that 
    \begin{align*}
        g_c(\omega) = \frac{g(c \omega)}{c}
    \end{align*}
    is such that $\lim_{c \to \infty} \norm{ g_c(\omega) - g_\infty(\omega) } = 0$ uniformly on compacts for some $h_\infty: \mathbb{R}^k \to \mathbb{R}^k$ and that the origin is the unique and globally asymptotically stable equilibrium of
    \begin{align*}
        \dot{\omega} = g_\infty(\omega).
    \end{align*}
\end{lemma}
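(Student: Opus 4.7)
The plan is to identify an explicit candidate for $g_\infty$ by computing the limit of $g_c$ as $c \to \infty$, and then to verify the two required properties (origin as unique equilibrium, global asymptotic stability) by essentially re-running the arguments of Lemma~\ref{lemma:ode} on a ``reward-free'' version of the multi-Bellman operator. First I would exploit the linearity of the parameterization, namely $q_{c\omega}(x,a) = c\,q_\omega(x,a)$, to write
\begin{align*}
    g_c(\omega) = \mathbb{E}\Bigl[\phi(x_t,a_t)\Bigl(\tfrac{1}{c}\tau^n(c\omega) - q_\omega(x_t,a_t)\Bigr)\Bigr].
\end{align*}
The rewards inside $\tau^n(c\omega)$ are bounded and appear only additively outside the $q_{c\omega}$ term, so dividing by $c$ kills them in the limit. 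Using the elementary inequality $|\max f - \max g|\leq \max |f-g|$ already invoked in the proof of Lemma~\ref{lma:contract_inf}, I would show the pointwise (in fact, uniform in $\omega$ and in the sampled trajectory) bound
\begin{align*}
    \Bigl|\tfrac{1}{c}\tau^n(c\omega) - \gamma^n \max_{\bar a} q_\omega(x_{t+n},a_{t+n})\Bigr|\;\leq\; \tfrac{r_\mathrm{max}}{c}\cdot\tfrac{1-\gamma^n}{1-\gamma},
\end{align*}
where $r_\mathrm{max}$ is the reward bound. This immediately identifies the limit as $g_\infty(\omega) = \mathbb{E}[\phi(x,a)((\tilde{\mathbf{H}}^n q_\omega)(x,a) - q_\omega(x,a))]$, where $\tilde{\mathbf{H}}$ is the Bellman operator with all rewards set to zero, and delivers the $\mu$-norm convergence $\|g_c(\omega)-g_\infty(\omega)\|_2 \leq \phi_\mathrm{max} r_\mathrm{max}(1-\gamma^n)/((1-\gamma)c)$ uniformly in $\omega$, which is stronger than what is required on compacts.

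Second, I would verify that the origin is the unique equilibrium of $\dot\omega=g_\infty(\omega)$. Since $\tilde{\mathbf{H}} 0 = 0$, iterating gives $\tilde{\mathbf{H}}^n 0 = 0$, hence $g_\infty(0)=0$. For uniqueness, $g_\infty(\omega)=0$ is exactly the projected fixed-point equation $q_\omega = \proj(\tilde{\mathbf{H}}^n q_\omega)$. The proof of Lemma~\ref{lma:contract_inf} did not use the specific form of the reward, so $\tilde{\mathbf{H}}^n$ is also a $\gamma^n$-contraction in the $\infty$-norm; consequently the argument of Proposition~\ref{prop:contraction:mbo} goes through verbatim and $\proj\tilde{\mathbf{H}}^n$ is a $\mu$-norm contraction for $n\geq N$, so Banach's theorem yields uniqueness.

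Third, for global asymptotic stability I would use the Lyapunov candidate $l(\omega)=\tfrac12\|\omega\|_2^2$, which is positive except at the origin and satisfies $\dot l(\omega) = \omega^\top g_\infty(\omega)$. Splitting $g_\infty(\omega)$ as
\begin{align*}
    g_\infty(\omega) = \mathbb{E}[\phi(x,a)\bigl((\tilde{\mathbf{H}}^n q_\omega)(x,a) - (\tilde{\mathbf{H}}^n 0)(x,a)\bigr)] - \mathbb{E}[\phi(x,a)\phi^T(x,a)]\omega,
\end{align*}
I would bound the first term in $\ell_2$-norm by $\tfrac{\phi_\mathrm{max}^2}{\mu_\mathrm{min}}\gamma^n\|\omega\|_2$ using the same chain of inequalities as in Lemma~\ref{lemma:ode} (Jensen, Cauchy--Schwarz, contractiveness of $\tilde{\mathbf{H}}^n$, and comparison of $\infty$- and $\mu$-norms), and bound the quadratic form below by $\lambda_\mathrm{min}\|\omega\|_2^2$, where $\lambda_\mathrm{min}>0$ is the smallest eigenvalue of the (invertible, by Assumption~1) covariance matrix. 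This gives
\begin{align*}
    \dot l(\omega)\leq \Bigl(\tfrac{\phi_\mathrm{max}^2}{\mu_\mathrm{min}}\gamma^n - \lambda_\mathrm{min}\Bigr)\|\omega\|_2^2,
\end{align*}
which is strictly negative for $\omega\neq 0$ once $n$ is large enough to make the parenthesis negative; enlarging $N$ if necessary guarantees this for all $n\geq N$. The main obstacle I anticipate is a clean handling of the nested maxima and expectations in $\tau^n(c\omega)/c$, but the bound above reduces the whole analysis to scalar estimates that only multiply the reward terms by $1/c$, so the limit operation is harmless.
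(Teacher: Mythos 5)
Your proposal is correct, but it takes a genuinely different route from the paper, and the difference is substantive. The paper's own proof writes $g_c(\omega)=\frac{1}{c}\mathbb{E}[\phi(x,a)(\tau^n - q_{c\omega}(x,a))]$ and then lets the entire term $\frac{1}{c}\mathbb{E}[\phi(x,a)\tau^n]$ vanish as $c\to\infty$, i.e.\ it implicitly treats the target as bounded independently of the scaled parameter; this yields $g_\infty(\omega)=-\mathbb{E}[\phi(x,a)\phi^T(x,a)]\,\omega$, a linear time-invariant system whose origin is trivially globally asymptotically stable for every $n$. You instead keep track of the fact that $\tau^n(c\omega)$ contains the bootstrap term $\gamma^{n-1}q_{c\omega}(x_{t+n},a_{t+n})$, which scales linearly in $c$ and therefore survives the division by $c$; only the reward terms are killed, as your bound $\bigl|\tfrac{1}{c}\tau^n(c\omega)-\gamma^n\max_{\bar a}q_\omega(x_{t+n},a_{t+n})\bigr|\le \tfrac{r_\mathrm{max}}{c}\tfrac{1-\gamma^n}{1-\gamma}$ shows. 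This gives the reward-free drift $g_\infty(\omega)=\mathbb{E}[\phi(x,a)((\tilde{\mathbf{H}}^n q_\omega)(x,a)-q_\omega(x,a))]$, which is the correct limit of $g_c$; the paper's $-\mathbb{E}[\phi\phi^T]\omega$ is what one would get only if the target did not bootstrap. The price of the correct limit is that stability of the origin is no longer automatic: you need the contraction machinery (the reward-free operator is still a $\gamma^n$-contraction, so Proposition~\ref{prop:contraction:mbo} applies verbatim for uniqueness) and a Lyapunov estimate $\dot l(\omega)\le(\tfrac{\phi_\mathrm{max}^2}{\mu_\mathrm{min}}\gamma^n-\lambda_\mathrm{min})\norm{\omega}_2^2$, valid only for $n\ge N$ (and note $\lambda_\mathrm{min}=1/\sigma_\mathrm{max}$, so your threshold coincides with the paper's $N$). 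That restriction is not a defect of your argument: for small $n$ the reward-free o.d.e.\ can genuinely be unstable (this is exactly what the divergence counter-examples exploit), so the lemma really only holds with the $n\ge N$ qualifier under which it is invoked in Theorem~\ref{theo:convergence:msql}. In short, your proof is the more careful one; the paper's shortcut obtains a stronger-looking (unconditional) conclusion only by discarding the parameter dependence of the target.
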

\begin{proof}
Let us expand the definition and obtain
\begin{align*}
    g_c(\omega) &= \frac{\mathbb{E} \left[ \phi(x, a) \left( \tau^n - q_{c \omega^n}(x, a) \right) \right]}{c} \\
    &= \frac{\mathbb{E} \left[ \phi(x, a) \tau^n \right]}{c} - \mathbb{E}\left[\phi(x, a)q_{\omega}(x, a)\right].
\end{align*}
As $c\to \infty$, we have that $g_c(\omega) \to g_\infty(\omega)$ uniformly, using $g_\infty(\omega) = -\mathbb{E}\left[ \phi(x, a) q_{\omega}(x, a) \right]$.

We then have that the ordinary differential equation
\begin{align*}
    \dot{\omega} &= g_\infty(\omega) \\
    &= -\mathbb{E}\left[ \phi(x, a) q_{\omega}(x, a) \right] \\
    &= - \mathbb{E}\left[ \phi(x, a) \phi^T(x, a) \right] \omega
\end{align*}
is linear and time-invariant. Since the matrix above is invertible, we have that the origin is its unique and globally asymptotically stable equilibrium. We conclude the result.
\end{proof}

\end{document}